\newcommand{\argmin}{\operatornamewithlimits{argmin}}
\newlength{\widebarargwidth}
\newlength{\widebarargheight}
\newlength{\widebarargdepth}
\definecolor{amethyst}{rgb}{1, 0, 1}
\definecolor{blue-violet}{rgb}{0.54, 0.17, 0.89}
\definecolor{brightturquoise}{rgb}{0.03, 0.91, 0.87}
\begin{document}

\title{Boosting as a Kernel-Based Method}

\author{\name Aleksandr Y.\ Aravkin \email saravkin@uw.edu \\
       \addr Department of Applied Mathematics \\
			University of Washington \\
			Seattle, WA 98195-4322, USA 		
       \AND
       \name Giulio\ Bottegal \email giulio.bottegal@gmail.com \\
       \addr Department of Electrical Engineering\\
       TU Eindhoven \\
       Eindhoven, MB 5600, The Netherlands
       \AND
       \name Gianluigi Pillonetto \email giapi@dei.unipd.it \\
       \addr Department of Information Engineering \\
       University of Padova \\
       Padova, 35131, Italy}

\editor{XXX}

\maketitle

\begin{abstract}
Boosting combines weak (biased) learners to obtain effective learning algorithms
for classification and prediction. 
In this paper, we show a
connection between boosting and kernel-based methods, 
highlighting both theoretical and practical applications. 
In the context of $\ell_2$ boosting, we start with a weak 
linear learner defined by a kernel $K$. We show that boosting with this learner 
is equivalent to estimation with a special {\it boosting kernel} that depends on $K$,
as well as on the regression matrix, noise variance, and hyperparameters.
The number of boosting iterations is modeled as a continuous hyperparameter, 
and fit along with other parameters using standard techniques.\\
We then generalize the boosting kernel to a broad new class of boosting approaches
for more general weak learners, 
including those based on the $\ell_1$, hinge and Vapnik losses.
The approach allows fast hyperparameter tuning for this general class, 
and has a wide range of applications, including robust regression and classification. 
We illustrate some of these applications with numerical examples on synthetic and real data. 
\end{abstract}

\begin{keywords}
Boosting; weak learners; Kernel-based methods; Reproducing kernel Hilbert spaces; robust estimation
\end{keywords}

\section{Introduction}

Boosting is a popular technique to construct learning algorithms \citep{schapire2003boosting}. The basic idea is that any \emph{weak learner}, i.e. algorithm that 
is only slightly better than guessing, can be used to build an effective learning mechanism that achieves high accuracy. 
Since the introduction of boosting in Schapire's seminal work \citep{schapire1990strength},
numerous variants have been proposed for regression, classification, and specific applications including semantic learning and computer vision  \citep{schapire2012boosting,viola2001fast,temlyakov2000weak,tokarczyk2015features,bissacco2007fast,cao2014face}.
In particular, in the context of classification, \emph{LPBoost}, \emph{LogitBoost} \citep{friedman2000additive}, \emph{Bagging and Boosting} \citep{lemmens2006bagging} and \emph{AdaBoost} \citep{freund1997decision} have become standard tools, the latter having being recognized as the best off-the-shelf binary classification method \citep{breiman1998arcing,zhu2009multi}. Applications of the boosting principle are also found in decision tree learning \citep{tu2005probabilistic} and distributed learning \citep{fan1999application}. For a survey on applications of boosting in classification tasks see the work of~\cite{freund1999short}. 
For regression problems, \emph{AdaBoost.RT} \citep{solomatine2004adaboost,avnimelech1999boosting} and \emph{$\ell_2$ Boost} \citep{buhlmann2003boosting,tutz2007boosting,champion2014sparse} are the most prominent boosting algorithms. 
In particular, in $\ell_2$ boosting the weak learner often corresponds to a kernel-based estimator with a heavily weighted regularization term. The fit on the training set is then measured using the quadratic loss and increases at each iteration.
Hence, the procedure can lead to overfitting if it continues too long \citep{buhlmann2007boosting}. To avoid this, several stopping criteria based on model complexity arguments have been developed. \cite{hurvich1998smoothing} propose a modified version of Akaike's information criterion (AIC); \cite{hansen2001model} use the principle of minimum description length (MDL), and \cite{buhlmann2003boosting} suggest a five-fold cross validation.\\
In this paper, we focus on $\ell_2$ boosting and
consider linear weak learners induced by the combination of a quadratic loss 
and a regularizer induced by a kernel $K$. We show that the resulting boosting estimator  
is equivalent to estimation with a special {\it boosting kernel} that depends on $K$,
as well as on the regression matrix, noise variance, and hyperparameters. 
This viewpoint leads to both greater generality and better computational efficiency. 
In particular, the number of boosting iterations $\nu$ is a continuous hyperparameter of the boosting kernel, 
and can be tuned by standard fast hyper-parameter selection techniques including SURE, 
generalized cross validation, and marginal likelihood \citep{hastie2001elements}.
In Section~\ref{sec:experiments}, 
we show that tuning $\nu$ is far more efficient than applying boosting iterations,
and non-integer values of $\nu$ can improve performance.\\  
We then generalize the boosting kernel to a wider class of problems, including robust regression, 
by combining the boosting kernel with piecewise linear quadratic (PLQ) loss functions (e.g. $\ell_1$, Vapnik, Huber). 
The computational burden of standard boosting is high for general loss functions, 
since the estimator at each iteration is no longer a linear function of the data. 
The boosting kernel makes the general approach tractable. 
We also use the boosting kernel in the context of regularization problems in reproducing kernel Hilbert spaces (RKHSs), 
e.g. to solve classification formulations that use the hinge loss.\\
The organization of the paper is as follows. After a brief overview of boosting in regression and classification, 
we develop the main connection between boosting and kernel-based methods in the context of finite-dimensional inverse problems in Section~\ref{sec:boosting_kernel}. 
Consequences of this connection are presented in Section \ref{sec:consequences}. 
In Section~\ref{sec:new_class_boosting} we combine the boosting kernel with PLQ penalties
to develop a new class of boosting algorithms. We also consider regression and classification in RKHSs.
In Section~\ref{sec:experiments} we show numerical results for several experiments involving the boosting kernel. 
We end with discussion and conclusions in Section~\ref{sec:conclusions}.

\section{Boosting as a kernel-based method} \label{sec:boosting_kernel}

In this section, we give a basic overview of boosting, and present the boosting kernel. 

\subsection{Boosting: notation and overview}\label{SecBS}

Assume we are given a model $g(\theta)$ for some observed data $y\in\mathbb{R}^n$,
where $\theta\in\mathbb{R}^m$ is an unknown parameter vector. 
Suppose our estimator $\hat \theta$ for $\theta$ minimizes some objective that balances variance with bias. 
In the boosting context, the objective is designed to provide a {\it weak estimator}, i.e.  
one with low variance in comparison to the bias. 

Given a loss function $\mathcal{V}$ and a kernel matrix $K\in\mathbb{R}^{m\times m}$, the weak estimator 
can be defined by minimizing the regularized formulation
\begin{equation}\label{Weak2}
\hat \theta :=  \arg\min_\theta \left\{ J(\theta; y) :=  \mathcal{V}(y-g(\theta)) + \gamma \theta^T K^{-1} \theta\right\},
\end{equation} 
where the regularization parameter $\gamma$ is large and leads to over-smoothing. 
Boosting uses this weak estimator iteratively, as detailed below. 
The predicted data for an estimator $\hat \theta$ are  denoted by $\hat{y}=g(\hat{\theta})$.

{\bf{Boosting scheme:}}
\begin{enumerate}
\item Set $\nu=1$ and obtain $\hat{\theta}(1)$ and $\hat{y}(1)=g(\hat{\theta}(1))$ using (\ref{Weak2});
\item Solve (\ref{Weak2}) using the current residuals as data vector, i.e. compute 
$$
\hat\theta(\nu) = \argmin_{\theta} \ J(\theta;y-\hat{y}(\nu)),
$$
and set the new predicted output to 
$$
\hat{y}(\nu+1) =  \hat{y}(\nu) + g(\hat\theta(\nu)). 
$$
\item Increase $\nu$ by 1 and repeat step 2 for a prescribed number of iterations.
\end{enumerate}

\subsection{Using regularized least squares as weak learner} 

Suppose data $y$ are generated according to 
\begin{equation}\label{MeasMod}
y = U\theta + v, \quad v \sim \mathcal N (0, \sigma^2 I), 
\end{equation}
where $U$ is a known regression matrix of full column rank. 
The components of $v$ are independent random variables, mean zero and variance $\sigma^2$.

We now use a quadratic loss to define the regularized weak learner. 
Let  $\lambda$ to denote the kernel scale factor and set
$\gamma=\sigma^2 / \lambda$ so that 
(\ref{Weak2}) becomes 
\begin{equation}\label{Eq1}
\hat \theta = \arg\min_{\theta} \|y - U\theta\|^2 + \frac{\sigma^2}{\lambda}\theta^T  K^{-1}\theta.
\end{equation}
We obtain the following expressions for the predicted data $\hat y = U \hat \theta$:
\begin{eqnarray}\nonumber 
\hat{y} &=& \arg\min_{f} \left\{\|y - f\|^2 + \sigma^2 f^T P_{\lambda}^{-1} f \right\}\\  \label{Eq2}
&=&   P_{\lambda} (P_{\lambda} + \sigma^2 I)^{-1}y,
\end{eqnarray}
where 
\begin{equation}\label{Eq3}
P_{\lambda} = \lambda U K U^T 
\end{equation}
is assumed invertible for the moment. This assumption will be relaxed later on.

The following well-known connection \citep{Wahba1990} between (\ref{Eq1}) and Bayesian estimation
is useful for theoretical development.
Assume that $\theta$ and $v$ are independent Gaussian random vectors with priors 
\[
\theta \sim \mathcal{N}(0, \lambda K), \quad v \sim \mathcal{N}(0, \sigma^2 I).
\]
Then, (\ref{Eq1}) and  (\ref{Eq2}) provide the minimum variance estimates of $\theta$ and $U \theta$ conditional 
on the data $y$. In view of this, we refer to diagonal values of $K$ as the {\it prior variances}
of $\theta$.

\subsection{The boosting kernel}

Define 
\begin{equation}
\label{eq:Slam}
S_{\lambda} =  P_{\lambda} (P_\lambda + \sigma^2 I)^{-1}.
\end{equation}
Fixing a small $\lambda$, the predicted data obtained by the weak kernel-based learner is
\[
\hat y(\nu = 1) = S_{\lambda}y,
\]
where $\nu$ is the number of boosting iterations.
According to the scheme specified in Section \ref{SecBS}, 
as $\nu$ increases, boosting refines the estimate as follows:
\begin{eqnarray}\nonumber
\hat y(2) &=& S_{\lambda}y+S_{\lambda}(I-S_{\lambda})y \\ \nonumber
\hat y(3) &=& S_{\lambda}y+S_{\lambda}(I-S_{\lambda})y + S_{\lambda}(I-S_{\lambda})^2y \\ \nonumber 
&\vdots& \\  \label{BoostEst}
 \hat y(\nu) &=& S_{\lambda}\sum_{i = 0}^{\nu-1} \left( I - S_{\lambda}\right)^{i} y. 
\end{eqnarray} 
 
We now show that the boosting estimates $\hat y(\nu)$  are kernel-based estimators 
from the {\it boosting kernel}, which plays a key role for subsequent developments.
 
\begin{proposition}\label{BoostKer}
The quantity $\hat y(\nu)$ is a kernel-based estimator
\begin{equation*} 
\hat{y}(\nu)  =  S_{\lambda, \nu}y = P_{\lambda,\nu }(P_{\lambda,\nu} + \sigma^2 I)^{-1} y,
\end{equation*}
where $P_{\lambda,\nu }$ is the boosting kernel defined by
\begin{eqnarray}
P_{\lambda, \nu} \nonumber 
&=& \sigma^2 \left( I - P_{\lambda}\left(P_{\lambda} + \sigma^2 I\right)^{-1}\right)^{-\nu}-\sigma^2I \\ \label{BoostingKer}
& =& \sigma^2 \left( I - S_{\lambda}\right)^{-\nu}-\sigma^2I.
\end{eqnarray}
\end{proposition}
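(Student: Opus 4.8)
The plan is to reduce the matrix identity to a scalar one via the spectral decomposition of $P_\lambda$: every matrix appearing in the statement is a function of the single symmetric matrix $P_\lambda$, so they are simultaneously diagonalizable and commute. First I would collapse the geometric sum in (\ref{BoostEst}) using the purely telescoping identity $(I-B)\sum_{i=0}^{\nu-1}B^i = I - B^\nu$, taken with $B = I - S_\lambda$ (so that $I-B = S_\lambda$). This gives
\[
\hat y(\nu) = S_\lambda \sum_{i=0}^{\nu-1}(I-S_\lambda)^i y = \bigl(I - (I-S_\lambda)^\nu\bigr) y .
\]
Note this step inverts nothing, so it holds whether or not $S_\lambda$ is singular. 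I then define $S_{\lambda,\nu} := I - (I-S_\lambda)^\nu$ as the candidate data filter.

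Next I would establish the invertibility needed to make sense of the negative matrix power in (\ref{BoostingKer}). Writing $P_\lambda = V\,\mathrm{diag}(p_i)\,V^T$ with $p_i \ge 0$ (symmetric positive semidefinite), the matrix $S_\lambda$ shares the eigenvectors $V$ with eigenvalues $p_i/(p_i+\sigma^2) \in [0,1)$. Consequently $I - S_\lambda$ has eigenvalues $\sigma^2/(p_i+\sigma^2) > 0$ and is invertible, so $(I-S_\lambda)^{-\nu}$ and hence $P_{\lambda,\nu}$ are well defined even without assuming $P_\lambda$ invertible. All quantities below are polynomials in $S_\lambda$ and $(I-S_\lambda)^{-1}$, and therefore commute.

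Finally I would verify the kernel form by a short algebraic computation. Setting $A := (I-S_\lambda)^\nu$, definition (\ref{BoostingKer}) reads $P_{\lambda,\nu} = \sigma^2 A^{-1} - \sigma^2 I = \sigma^2 A^{-1}(I-A)$. Adding $\sigma^2 I$ yields $P_{\lambda,\nu} + \sigma^2 I = \sigma^2 A^{-1}$, which is invertible with inverse $\sigma^{-2} A$. Multiplying and cancelling $A^{-1}A$ by commutativity,
\[
P_{\lambda,\nu}\bigl(P_{\lambda,\nu} + \sigma^2 I\bigr)^{-1} = \sigma^2 A^{-1}(I-A)\cdot \sigma^{-2} A = I - A = S_{\lambda,\nu}.
\]
Comparing with the first step gives $\hat y(\nu) = S_{\lambda,\nu} y = P_{\lambda,\nu}(P_{\lambda,\nu}+\sigma^2 I)^{-1} y$, as claimed.

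I expect no serious obstacle here; the proposition is essentially a spectral bookkeeping exercise. The two points requiring care are (i) collapsing the geometric sum without inverting the possibly singular $S_\lambda$, which the telescoping form handles, and (ii) confirming via the spectrum that $I - S_\lambda$ is invertible so that the negative power in (\ref{BoostingKer}) is legitimate. Both follow immediately from the eigenvalue picture above.
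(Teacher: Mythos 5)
Your proof is correct and follows essentially the same route as the paper's: both hinge on the identity $P_{\lambda,\nu}+\sigma^2 I = \sigma^2\left(I-S_\lambda\right)^{-\nu}$, so that $P_{\lambda,\nu}\left(P_{\lambda,\nu}+\sigma^2 I\right)^{-1} = I-\left(I-S_\lambda\right)^{\nu}$, which matches the collapsed geometric sum in (\ref{BoostEst}). The only differences are bookkeeping: you make explicit the telescoping of the sum and the spectral argument that $I-S_\lambda$ is invertible even when $P_\lambda$ is singular, details the paper leaves implicit (it works under the standing assumption that $P_\lambda$ is invertible).
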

\begin{proof}
First note that  $S_\lambda$ satisfies 
\begin{equation}
\label{eq:SlamId}
S_\lambda = P_{\lambda}\left(P_{\lambda} + \sigma^2 I\right)^{-1} 
= 
I - \sigma^2\left(P_{\lambda} + \sigma^2 I \right)^{-1}.
\end{equation}
This follows simply from adding the term $\sigma^2\left(P_{\lambda} + \sigma^2\right)^{-1}$ to~\eqref{eq:Slam}
and observing that expression reduces to $I$. Next, plugging in the expression~\eqref{BoostingKer} for $P_{\lambda, \nu}$  into 
the right hand side of expression~\eqref{eq:SlamId} for $S_{\lambda, \nu}$, 
we have
\[
\begin{aligned}
S_{\lambda, \nu} &= I - \sigma^2\left(P_{\lambda,\nu} + \sigma^2 I \right)^{-1}\\
& = I - \sigma^2\left(\sigma^2 \left( I - S_{\lambda}\right)^{-\nu} \right)^{-1}\\
& = I - \left( I - S_{\lambda}\right)^{\nu} \\
& = S_{\lambda}\sum_{i = 0}^{\nu-1} \left( I - S_{\lambda}\right)^{i},
\end{aligned}
\]
exactly as required by 
(\ref{BoostEst}).
\end{proof}
In Bayesian terms, for a given $\nu$, the above result also shows that
boosting returns the minimum variance estimate of the noiseless output $f$ conditional on $y$ if $f$ and $v$ are independent Gaussian random vectors with priors 
\begin{equation} \label{Eq5}
f \sim \mathcal{N}(0, P_{\lambda,\nu}), \quad v \sim \mathcal{N}(0, \sigma^2 I). 
\end{equation}

\section{Consequences} \label{sec:consequences}

In this section, we use Proposition \ref{BoostKer} to gain new insights on boosting
and a new perspective on hyperparameter tuning.

\subsection{Insights on the nature of boosting}\label{Sec3.1}

We first derive a new representation of the boosting kernel $P_{\lambda, \nu}$ via 
a change of coordinates. Let $VDV^T$ be the SVD of $UKU^T$.  
Then, we obtain 
\begin{eqnarray}\nonumber
P_{\lambda, v}  &=& \frac{\sigma^2}{(\sigma^2)^\nu} \left(\lambda U K U^T + \sigma^2 I\right)^{\nu} - \sigma^2 I \\ \label{BoostingKer2}
& =&  \sigma^2 V \left[ \left(\frac{\lambda D + \sigma^2 I}{\sigma^2}\right)^{\nu}-I\right]V^T
\end{eqnarray}
and the predicted output can be rewritten as 
\[
\hat{y}(\nu) = V\left(I- \sigma^{2\nu} \left(\lambda D + \sigma^2 I\right)^{-\nu}\right)V^Ty.
\]
In coordinates $z = V^Ty$, the estimate of each component of $z$ is 
\begin{equation}\label{BoostEstzi}
\hat{z}_i(\nu) = \left(1 - \frac{\sigma^{2\nu}}{\left(\lambda d_i^2 + \sigma^2\right)^\nu}\right)z_i,
\end{equation}
and corresponds to the regularized least squares estimate induced by a diagonal kernel with $(i,i)$ entry 
\begin{equation}\label{BoostPriorVar}
\sigma^2 \left(\frac{\lambda d_i^2}{\sigma^2}+1\right)^\nu - \sigma^2.
\end{equation}
In Bayesian terms, (\ref{BoostPriorVar}) is the prior variance assigned by boosting 
to the noiseless output $V^T U \theta$.

Eq. \eqref{BoostPriorVar}  shows that boosting
builds a kernel on the basis of the output signal-to-noise ratios
$SNR_i = \frac{\lambda d_i^2}{\sigma^2}$, which then enter  
$\left(\frac{\lambda d_i^2}{\sigma^2}+1\right)^\nu$. 
All diagonal kernel elements
with $d_i>0$ grow to $\infty$ as $\nu$
increases; therefore asymptotically, data will be perfectly interpolated
but with growth rates controlled by the $SNR_i$. 
If $SNR_i$ is large, the prior
variance increases quickly and  
after a few iterations the estimator is essentially unbiased along the
$i$-th direction. If $SNR_i$ is close to zero, the $i$-th
direction is treated as though affected by ill-conditioning, and a large $\nu$
is needed to remove the regularization on $\hat{z}_i(\nu)$.
 
This perspective makes it clear when boosting can be effective. 
In the context of inverse problems (deconvolution),
$\theta$ in (\ref{MeasMod}) represents the unknown input to a linear system whose impulse response defines the 
regression matrix $U$. For simplicity, assume that the kernel $K$ is set to the identity matrix,
so that the weak learner (\ref{Eq1}) becomes ridge regression  and the $d_i^2$ in (\ref{BoostPriorVar}) reflect 
the power content of the impulse response at different frequencies. 
Then, \emph{boosting can outperfom standard ridge regression 
if  the system impulse response and input share a similar power spectrum}.
Under this condition, 
boosting can inflate the prior variances
(\ref{BoostPriorVar}) along  
the right directions. 
For instance, if the impulse response energy is located at low frequencies,
as $\nu$ increases  boosting will amplify the low pass
nature of the regularizer. This can significantly improve the estimate
if the input is also low pass.

\subsection{Hyperparameter estimation}\label{Sec3.2}

In the classical scheme described in section \ref{SecBS},
$\nu$ is an iteration counter that only takes integer values,
and the boosting scheme is sequential: to obtain the estimate $\hat{y}(\nu=m)$,
one has to solve $m$ optimization problems.
Using (\ref{BoostingKer}) and (\ref{BoostingKer2}),
we can interpret $\nu$ as a kernel hyperparameter, and let it 
take real values.
In the following we estimate both the scale factor $\lambda$
and $\nu$ from the data, and restrict the range of $\nu$
to $\nu \geq 1$.

The resulting boosting
approach estimates $(\lambda,\nu)$  by minimizing  fit measures such as cross validation or SURE~\citep{hastie2001elements}.
In particular, this accelerates the tuning procedure, as it requires solving a single problem instead of 
multiple boosting iterations. 
Consider estimating $(\lambda,\nu)$ using the SURE method. Given $\sigma^2$ 
(e.g. using an unbiased estimator), choose 
\begin{equation}\label{SURE}
(\hat \lambda,\hat \nu) = \arg\min_{\lambda \geq 0, \nu \geq 1} \ \|y - \hat y(\nu)\|^2 + 2 \sigma^2 \mbox{trace}(S_{\lambda,\nu}).
\end{equation}
Straightforward computations show that, for the cost of a single SVD, 
problem~\eqref{SURE} simplifies to
\begin{equation} \label{eq:SURE_2}
(\hat \lambda,\hat \nu) = \arg\min_{\lambda\geq 0,\nu\geq 1} 
\sum_{i=1}^n \frac{z_i^2\sigma^{4\nu}}{\left(\lambda d_i^2 + \sigma^2\right)^{2\nu}} + 
2 \sigma^2 n- \sum_{i=1}^n \frac{2 \sigma^{2\nu+2}}{(\lambda d_i^2 + \sigma^2)^\nu},
\end{equation}
which is a smooth 2-variable problem over a box, and can be easily optimized. 

We can also extract some useful information on the nature of the optimization problem (\ref{eq:SURE_2}).
In fact, denoting $J$ the objective, we have 
\begin{align} \label{eq:der_J}
\frac{\partial J}{\partial \nu} & = 2 \sum_{i=1}^n  \log (\alpha_i) z_i^2 \alpha_i^{2\nu} - 2\sigma^2\sum_{i=1}^n  \log (\alpha_i) \alpha_i^{\nu} \nonumber \\
& = 2 \sum_{i=1}^n \log (\alpha_i) \alpha_i^{\nu} ( z_i^2 \alpha_i^{\nu} - \sigma^2) \,,
\end{align}
where we have defined $\alpha_i := \frac{\sigma^2}{\lambda d_i^2 + \sigma^2} \,.$ 
Simple considerations on the sign of the derivative then show that
\begin{itemize}
\item if 
\begin{equation} \label{eq:lambda_min}
\lambda < \min_{i=1,\ldots,n}\frac{z_{ i}^2 - \sigma^2}{d_{ i}^2}, 
\end{equation}
then $\hat \nu = +\infty$. This means that we have chosen a learner so weak that SURE suggests an infinite 
number of boosting iterations as optimal solution;
\item if 
\begin{equation} \label{eq:lambda_max}
\lambda > \max_{i=1,\ldots,n} \frac{z_{ i}^2 - \sigma^2}{d_{ i}^2}, 
\end{equation}
then $\hat \nu = 1$. This means that the weak learner is instead so strong that SURE suggests not to perform any boosting iterations.
\end{itemize}

\subsection{Numerical illustration} 

\begin{figure}
  \begin{center}
   \begin{tabular}{cc}
\hspace{-.2in}
 { \includegraphics[scale=0.4]{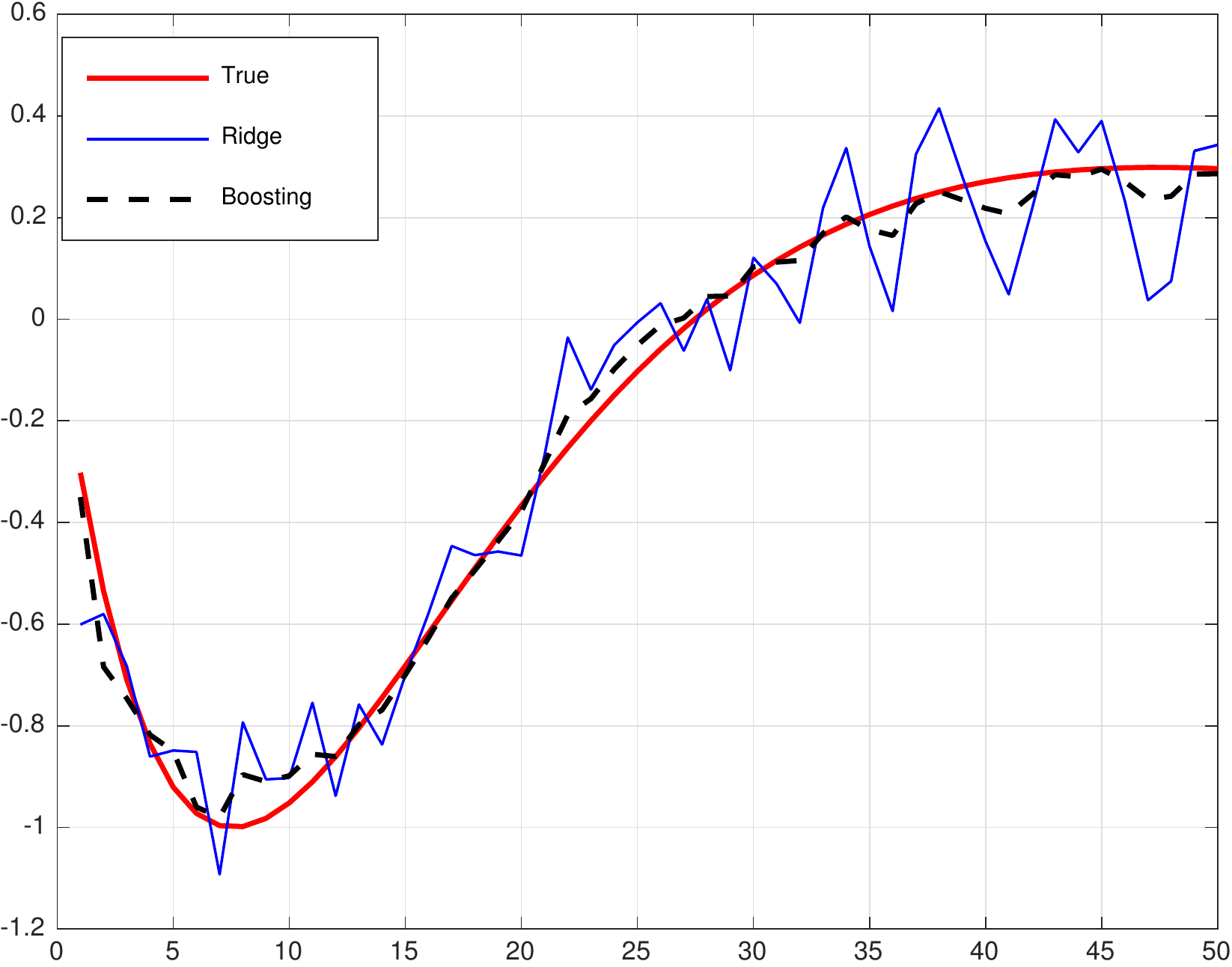}}
    \end{tabular}
    \caption{True signal (thick red line), Ridge estimate (solid blue) and Boosting estimate (dashed black)
    obtained in the first Monte Carlo run. The system impulse response is a low pass signal.}  \label{Fig1}
     \end{center}
\end{figure}

\begin{figure*}
  \begin{center}
   \begin{tabular}{cc}
\hspace{-.2in}
 { \includegraphics[scale=0.35]{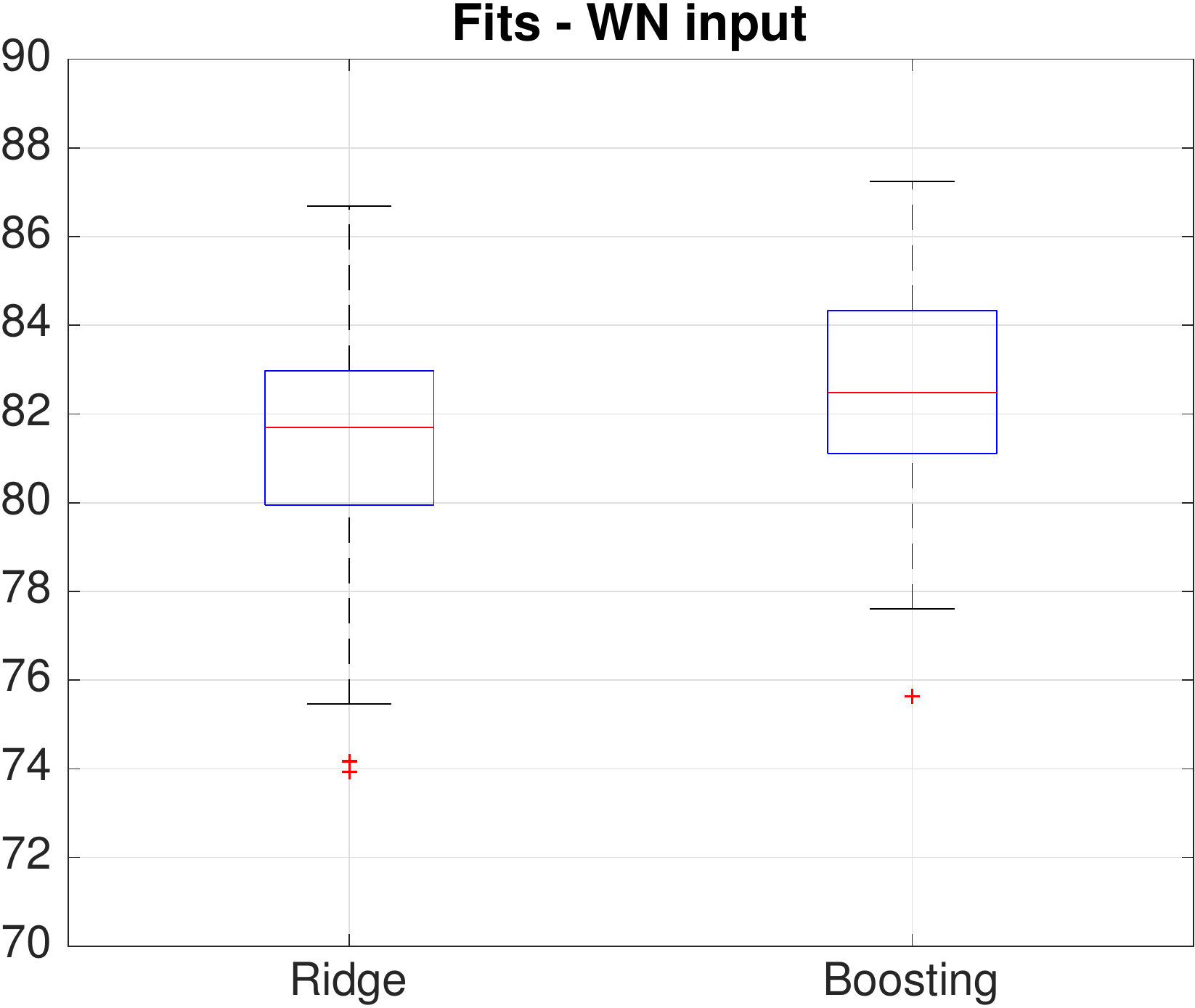}}
\hspace{.1in}
 { \includegraphics[scale=0.35]{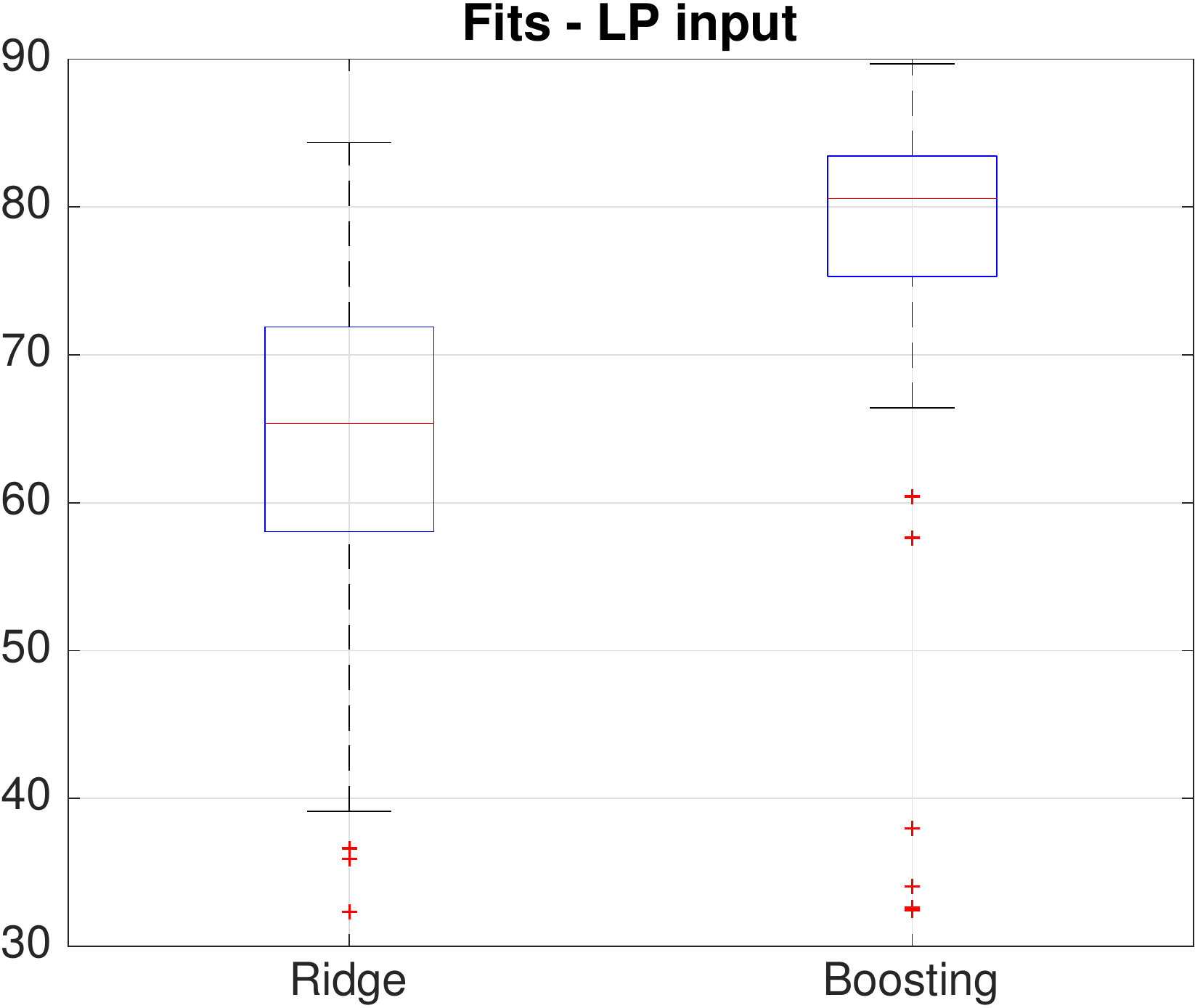}}
    \end{tabular}
    \caption{Boxplot of the percentage fits obtained by Ridge regression and Boosting, using SURE
    to estimate hyperparameters; system impulse response is white noise (left) and low pass (right).}  \label{Fig2}
     \end{center}
\end{figure*}

We illustrate our insights using a numerical experiment.
Consider (\ref{MeasMod}), where $\theta \in \mathbb{R}^{50}$ represents the input to a discrete-time linear system.
In particular, the signal is taken from \citep{Wahba1990} and displayed in Fig. \ref{Fig1} (thick red line). 
The system is represented by the regression matrix $U \in \mathbb{R}^{200 \times 50}$ whose components are 
realizations of either white noise or  low
pass filtered white Gaussian noise with normalized band $[0,0.95]$. The measurement noise is white and Gaussian, with variance 
assumed known and set to 
the variance of the noiseless output divided by 10.

We use a Monte Carlo of 100 runs to compare the following two estimators
\begin{itemize}
\item {\bf Boosting}:  boosting estimator with $K$ set to the identity matrix and with 
$(\lambda,\nu)$ estimated using the SURE strategy (\ref{SURE}).
\item{ \bf Ridge:}  ridge regression (which corresponds to boosting with $\nu$ fixed to 1).
\end{itemize}

Fig. \ref{Fig2} displays the box plots of the 100 percentage fits of $\theta$,
$100\left(1-\frac{\| \theta- \hat{\theta} \|}{\| \theta \|}\right)$,
obtained by {\bf Boosting} and {\bf Ridge}.
When the entries of $U$ are white noise (left panel) 
one can see that the two estimators have similar performance.
When the entries of $U$ are filtered white noise (right panel) 
{\bf Boosting} performs significantly better than {\bf Ridge}. 
Furthermore, 36 out of the 100 fits achieved by {\bf Boosting}  
under the white noise scenario are lower
than those obtained adopting a low pass $U$, which is surprising since the conditioning 
of latter problem is much worse.
The reasons are those previously described. 
The unknown $\theta$ represents a smooth signal. In Bayesian terms, setting $K$ to the identity matrix 
corresponds to modeling it as white noise, which is a poor prior.
If the nature of $U$ is low pass, the energy of the $d_i^2$ are more concentrated at low frequencies.
So, as $\nu$ increases, {\bf Boosting} can inflate the prior variances associated to the low-frequency components
of $\theta$. The prior variances associated to high-frequencies induce low $SNR_i$, so that they increase 
slowly with $\nu$. This does not happen in the white noise case, since the random variables $d_i^2$ have 
similar distributions. Hence, the original white noise prior for $\theta$ can be significantly 
refined only in the low pass context: it is reshaped so as to form 
a regularizer, inducing more smoothness. Fig. \ref{Fig1} shows this effect by plotting  
estimates from {\bf Ridge} and {\bf Boosting} in a Monte Carlo run where $U$ is low pass.

\section{Boosting algorithms for general loss functions and RKHSs}\label{sec:new_class_boosting}

In this section, we combine the boosting kernel with piecewise linear-quadratic (PLQ) losses to 
obtain tractable algorithms for more general regression and classification problems. We also 
consider estimation in Reproducing Kernel Hilbert (RKHS) spaces. 

\subsection{Boosting kernel-based estimation with general loss functions}

\begin{figure}[h!]
\center
\begin{tabular}{ccccc}\\ 
\begin{tikzpicture}
  \begin{axis}[
    thick,
    height=2cm,
    xmin=-2,xmax=2,ymin=0,ymax=1,
    no markers,
    samples=50,
    axis lines*=left, 
    axis lines*=middle, 
    scale only axis,
    xtick={-1,1},
    xticklabels={},
    ytick={0},
    ] 
\addplot[blue, domain=-2:+2]{.5*x^2};
  \end{axis}
  \end{tikzpicture}
&\begin{tikzpicture}
  \begin{axis}[
    thick,
    height=2cm,
    xmin=-2,xmax=2,ymin=0,ymax=1,
    no markers,
    samples=100,
    axis lines*=left, 
    axis lines*=middle, 
    scale only axis,
    xtick={-1,1},
    xticklabels={},
    ytick={0},
    ] 
\addplot[blue-violet, domain=-2:+2]{sqrt(1+(x/1)^2) - 1};
  \end{axis}
\end{tikzpicture}
&\begin{tikzpicture}
  \begin{axis}[
    thick,
    height=2cm,
    xmin=-2,xmax=2,ymin=0,ymax=1,
    no markers,
    samples=50,
    axis lines*=left, 
    axis lines*=middle, 
    scale only axis,
    xtick={-1,1},
   xticklabels={},
    ytick={0},
    ] 
\addplot[red,domain=-2:0.5,densely dashed]{0*x};
\addplot[red,domain=0.5:+2,densely dashed]{x-.5};
  \end{axis}
\end{tikzpicture}\\
  (a) quadratic 
& (b)  huber 
& (d)  hinge \\
 \begin{tikzpicture}
  \begin{axis}[
    thick,
    height=2cm,
    xmin=-2,xmax=2,ymin=0,ymax=1,
    no markers,
    samples=100,
    axis lines*=left, 
    axis lines*=middle, 
    scale only axis,
    xtick={-.24,.56},
    xticklabels={},
    ytick={0},
    ] 
\addplot[red,domain=-2:-2*0.3*0.4,densely dashed]{0.3*abs(x) - 0.4*0.3^2};
\addplot[blue,domain=-2*0.3*0.4:2*(1-0.3)*0.4]{0.25*x^2/0.4};
\addplot[red,domain=2*(1-0.3)*0.4:2,densely dashed]{(1-0.3)*abs(x) - 0.4*(1-0.3)^2};
\addplot[blue,mark=*,only marks] coordinates {(-.24,0.0550) (0.56,0.20)};
  \end{axis}
\end{tikzpicture} 
& \begin{tikzpicture}
  \begin{axis}[
    thick,
    height=2cm,
    xmin=-2,xmax=2,ymin=0,ymax=1,
    no markers,
    samples=50,
    axis lines*=left, 
    axis lines*=middle, 
    scale only axis,
    xtick={-0.5,0.5},
    xticklabels={},
    ytick={0},
    ] 
    \addplot[red,domain=-2:-0.5,densely dashed] {-x-0.5};
    \addplot[domain=-0.5:+0.5] {0};
    \addplot[red,domain=+0.5:+2,densely dashed] {x-0.5};
    \addplot[blue,mark=*,only marks] coordinates {(-0.5,0) (0.5,0)};
  \end{axis}
\end{tikzpicture}
& \begin{tikzpicture}
  \begin{axis}[
    thick,
    height=2cm,
    xmin=-2,xmax=2,ymin=0,ymax=1,
    no markers,
    samples=100,
    axis lines*=left, 
    axis lines*=middle, 
    scale only axis,
    xtick={-1,1},
    xticklabels={},
    ytick={0},
    ] 
\addplot[amethyst, domain=-2:+2]{.5*x^2 + 0.5*abs(x)};
  \end{axis}
\end{tikzpicture}
\\ 
(e) quantile 
& (f) vapnik  
& (h) elastic net 
\end{tabular}
\caption{\label{fig:SDRex} Six common piecewise-linear quadratic losses. }
\end{figure}
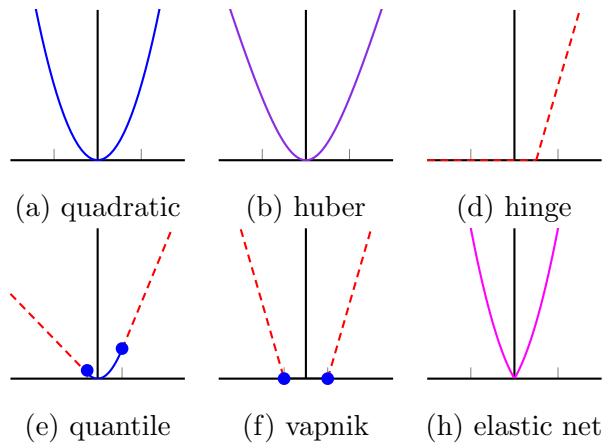

In the previous sections, the boosting kernel was derived using regularized least squares (\ref{Eq1}) as the
weak learner. The sequence of resulting linear estimators then led 
to a closed form expression for
$P_{\lambda,\nu}$. 
Now, we consider a kernel-based weak learner~\eqref{Weak2}, 
based on a general (convex) penalty $\mathcal{V}$. 
Important examples include  Vapnik's epsilon insensitive loss (Fig.~\ref{fig:SDRex}f) 
used in support vector regression~\citep{Vapnik98,Hastie01,Scholkopf00,Scholkopf01b},
hinge loss (Fig.~\ref{fig:SDRex}d) used for classification~\citep{Evgeniou99,Pontil98,Scholkopf00},
Huber and quantile huber (Fig~\ref{fig:SDRex}b,e), 
used for robust regression\citep{Hub,Mar,Bube2007,Zou08,KG01,Koenker:2005,aravkin2014qh}, 
and elastic net (Fig.~\ref{fig:SDRex}f),  a sparse regularizer 
that also finds correlated predictors~\citep{ZouHuiHastie:2005,EN_2005,li2010bayesian,de2009elastic}. 
The resulting boosting scheme is computationally expensive:
$\hat{y}(\nu=m)$ requires solving a sequence of 
$m$ optimization problems, each of which must be solved iteratively. 
In addition, since the estimators $\hat{y}(\nu)$
are no longer linear, deriving a boosting kernel is no longer straightforward. 

We combine general loss $\mathcal{V}$ with the regularizer induced by the 
boosting kernel from the linear case to define a new class of kernel-based boosting algorithms. 
More specifically, given a kernel $K$, let $VDV^T$ be the SVD of $UKU^T$.  If 
$P_{\lambda,\nu}$ is invertible,
the boosting output estimate is 
$\hat{y}(\nu) = U \hat{\theta}(\nu)$ where
\begin{eqnarray} \nonumber
\hat{\theta}(\nu) &=& \arg\min_{\theta} \ \mathcal{V}(y - U \theta) + \sigma^2 \theta^T U^T P_{\lambda,\nu}^{-1} U \theta \\ \label{Eq6}
&=&     \arg\min_{\theta}  \left\{  \mathcal{V}(y - U \theta) +  \theta^T U^T V \left[ \left(\frac{\lambda D + \sigma^2 I}{\sigma^2}\right)^{\nu}-I\right]^{-1}V^T U \theta\right\},
\end{eqnarray}
where the last line is obtained using (\ref{BoostingKer2}). Note, here and also in the reformulations below, 
that the solution depends on $\lambda$ and $\sigma^2$ only through the ratio $\gamma=\sigma^2/\lambda$.\\ 
If  $P_{\lambda,\nu}$ is not invertible, the following two strategies can be adopted.
\paragraph{Approach I:} We use 
\eqref{BoostingKer2} to obtain the factorization
$$
P_{\lambda,\nu} = \sigma^2 A_{\lambda,\nu} A_{\lambda,\nu}^T,
$$
where $A_{\lambda,\nu}$ is full column rank and
contains the columns of the matrix 
$$
A_{\lambda,\nu}  = V \left[ \left(\frac{\lambda D + \sigma^2 I}{\sigma^2}\right)^{\nu}-I\right]^{1/2}
$$
associated to the $d_i>0$. 
Then, the output estimate is $\hat{y}(\nu)= A_{\lambda,\nu} \hat{a}(\nu)$ with
\begin{equation}  \label{Eq7}
\hat{a}(\nu) = \arg\min_{a} \left\{ \mathcal{V}(y - A_{\lambda,\nu} a) + a^T a \right\}.
\end{equation}
The estimate of $\theta$ is then given by 
$\hat{\theta} =  U^{\dag}_{\lambda,\nu} \hat{y}(\nu)$,
where $U^{\dag}_{\lambda,\nu}$ is the pseudo-inverse of $U_{\lambda,\nu}$.
One advantage of the formulation (\ref{Eq7}) is that the evaluation of $A_{\lambda,\nu}$
for different $\lambda$ and $\nu$ is efficient.\\
\paragraph{Approach II:} Define the matrix 
$$
B_{\lambda,\nu}=U P_{\lambda,\nu} U^T.
$$ 
Then, it is easy to see that 
another representation for the output estimate is $\hat{y}(\nu)= B_{\lambda,\nu} \hat{b}(\nu)$ with
\begin{equation}  \label{Eq8}
\hat{b}(\nu) = \arg\min_{b} \left\{ \mathcal{V}(y - B_{\lambda,\nu} b) + b^T B_{\lambda,\nu}  b\right\}.
\end{equation} 
\bigskip

The new class of boosting kernel-based estimators defined by (\ref{Eq7}) or (\ref{Eq8})
keeps the advantages of boosting in the quadratic case.
In particular, the kernel structure can decrease 
bias along directions less exposed to noise.
The use of a general loss $\mathcal{V}$ allows a range of applications, 
with e.g. penalties such as Vapnik and Huber,
guarding against outliers in the training set. 
Finally, the algorithm has clear computational advantages
over the classic scheme described in Section \ref{SecBS}. 
Whereas in the classic approach, $\hat{y}(\nu=m)$ require solving 
$m$ optimization problems, in the new approach, given any positive  $\lambda$ and $\nu \geq 1$, 
the prediction $\hat{y}(\nu=m)$ is obtained by solving the single convex optimization problem (\ref{Eq6}).
This  is illustrated in Section \ref{sec:experiments}.  

\subsection{New boosting algorithms in RKHSs} \label{sec:RKHS}

We now show how the new class of boosting algorithms can be extended to the context
of regularization in RKHSs. We start with $\ell_2$ Boost in RKHSs.

Assume that we want to reconstruct a function 
from $n$ sparse and noisy data $y_i$ collected on input locations $x_i$ taking values
on the input space $\mathcal{X}$. 
Our aim now is to allow the function estimator to assume values in infinite-dimensional spaces, 
introducing suitable regularization to circumvent ill-posedness, e.g. in terms of function smoothness.
For this purpose, 
we use $\mathcal{K}$ denote a kernel function 
$\mathcal{K}: \mathcal{X} \times \mathcal{X} \rightarrow \mathbb{R}$
which captures smoothness properties of the unknown function. 
We can then use $\ell_2$ Boost, with weak learner
\begin{equation}\label{WeakRKHS}
\argmin_{f \in \mathcal{H}} \ \sum_{i=1}^n \ \mathcal{V}_i(y_i-f(x_i)) + \gamma \| f \|^2_{\mathcal{H}},
\end{equation} 
where $\mathcal{V}_i$ is a generic convex loss and 
$\mathcal{H}$ is the RKHS induced by $\mathcal{K}$ with norm denoted by 
$\| \cdot \|_{\mathcal{H}}$. From the representer theorem  of \cite{Scholkopf01}, the solution of 
(\ref{WeakRKHS}) is $\sum_{i=1}^n \ \hat{c}_i  \mathcal{K}(x_i,\cdot)$
where the $\hat{c}_i$ are the components of the column vector
\begin{equation}\label{WeakRKHS2}
\argmin_{c \in \mathbb{R}^n} \ \sum_{i=1}^n \ \mathcal{V}_i(y_i- K_{i,\cdot} c ) + \gamma c^T K c,
\end{equation} 
and $K$ is the kernel (Gram) matrix, with $K_{i,j} = \mathcal{K}(x_i,x_j)$ and 
$K_{i,\cdot}$ is the $i$-th row of $K$.  
Using (\ref{WeakRKHS2}), we extend the boosting scheme from 
section \ref{SecBS} with (\ref{WeakRKHS}) as  the weak learner.
In particular, repeated applications of the representer theorem ensure that, for any 
value of the iteration counter $\nu$,
the corresponding function estimate belongs to the subspace 
spanned by the $n$ kernel sections $\mathcal{K}(x_i,\cdot)$. 
Hence, $\ell_2$ Boosting in RKHS can be summarized as follows. 

{\bf{Boosting scheme in RKHS:}}
\begin{enumerate}
\item Set $\nu=1$. Solve (\ref{WeakRKHS2}) to obtain $\hat{c}$ and $\hat{f}$ for $\nu=1$, call them
$\hat{c}(1) $ and $\hat{f}(\cdot,1)$.
\item Update $c$ by solving (\ref{WeakRKHS2}) with the current residuals as the data vector:
$$
\hat{c}(\nu+1) = \hat{c}(\nu) + \argmin_{c \in \mathbb{R}^n} \ \sum_{i=1}^n \ \mathcal{V}_i(y_i- K_i \hat{c}(\nu) - K_i c ) + \gamma c^T K c,
$$
and set the new estimated function to
$$
\hat{f}(\cdot, \nu+1) =  \sum_{i=1}^n \ \hat{c}_i(\nu+1)  \mathcal{K}(x_i,\cdot). 
$$
\item Increase $\nu$ by 1 and repeat step 2 for a prescribed number of iterations.
\end{enumerate}

There is a fundamental computational drawback related to this scheme 
which we have already encountered in the previous sections. 
To obtain $\hat{f}(\cdot, \nu)$ we need to solve $\nu$ optimization problems, 
each of them requiring an iterative procedure.
Now, we define a new computationally efficient class of 
regularized estimators in RKHS.
The idea is to 
obtain the expansion coefficients of the function estimate through the new boosting kernel. 
Letting $\gamma=\sigma^2 / \lambda$ and $P_{\lambda}= \lambda K$, with $K$ 
the kernel matrix, define the boosting kernel $P_{\lambda,\nu}$
as in (\ref{BoostingKer}). Then, we can first solve 
\begin{equation}  \label{Eq9}
\hat{b}(\nu) = \arg\min_{b} \left\{ \mathcal{V}(y - P_{\lambda,\nu} b) + b^T P_{\lambda,\nu}  b\right\},
\end{equation} 
with $\mathcal{V}$ defined as the sum of the $\mathcal{V}_i$.  
Then, we compute 
$$
\tilde{c}=K^{\dag}\tilde{y}(\nu) \quad \mbox{with} \quad  \tilde{y}(\nu) = P_{\lambda,\nu} \hat{b}(\nu), 
$$
and the estimated function becomes 
$$
\hat{f}(\cdot, \nu) =  \sum_{i=1}^n \ \tilde{c}_i(\nu)  \mathcal{K}(x_i,\cdot). 
$$
Note that the weights $\tilde{c}(\nu)$ coincide with $\hat{c}(\nu)$ only when the
$\mathcal{V}_i$ are quadratic. Nevertheless, given any loss, (\ref{Eq9}) preserves
all advantages of boosting outlined in the linear case.
Furthermore, as in the finite-dimensional case, 
given any $\nu$ and kernel hyperparameter, the estimator (\ref{Eq9}) can compute $\tilde{c}(\nu)$  
by solving a single problem, rather than iterating the boosting scheme. 

\paragraph{Classification with the hinge loss.}
Another advantage related to the use of the boosting kernel w.r.t. 
the classical boosting scheme arises in the classification context.
Classification tries to predict one of two output values, e.g. 1 and -1, 
as a function of the input. $\ell_2$ Boost could be used 
using the residual $y_i-f(x_i)$ as misfit, e.g. 
equipping the weak learner (\ref{WeakRKHS}) with the quadratic
or the $\ell_1$ loss.  
However, in this context one often prefers to use the margin $m_i=y_if(x_i)$
on an example $(x_i,y_i)$ to measure how well 
the available data are classified. For this purpose, 
support vector classification is widely used \citep{scholkopf2002learning}. It relies on the
hinge loss 
\[
\mathcal{V}_i(y_i,f(x_i)) = | 1 - y_i f(x_i) |_+ =
\left\{
\begin{array}{lcl}
0, \quad & m > 1 \\
1-m, \quad & m \leq 1
\end{array}, \quad m=y_if(x_i),
\right.
\]
which gives a linear penalty when $m<1$. Note that this loss
assumes $y_i\in\{1,-1\}$. 
However, the classical  boosting scheme applies  the weak learner (\ref{WeakRKHS}) repeatedly, and 
 {\bf residuals will not be binary} for $\nu>1$. 
 This means that $\ell_2$ Boost cannot be used for the hinge loss.
 
 This limitation does not affect the new class of boosting-kernel based estimators:
support vector classification can be boosted
by plugging in the hinge loss into (\ref{Eq9}): 
\begin{equation}  \label{Eq10}
\hat{b}(\nu) = \arg\min_{b} \ \sum_{i=1}^n  | 1 - y_i [P_{\lambda,\nu} b]_i |_+ + b^T P_{\lambda,\nu}  b,
\end{equation} 
where we have used $[P_{\lambda,\nu} b]_i $ to denote the $i$-th component of $P_{\lambda,\nu} b$.

\section{Numerical Experiments} \label{sec:experiments}

\subsection{Boosting kernel regression: temperature prediction real data}\label{sec:RealData}

\begin{figure}[h!]
  \begin{center}
   \begin{tabular}{cc}
\hspace{-.2in}
 { \includegraphics[scale=0.35]{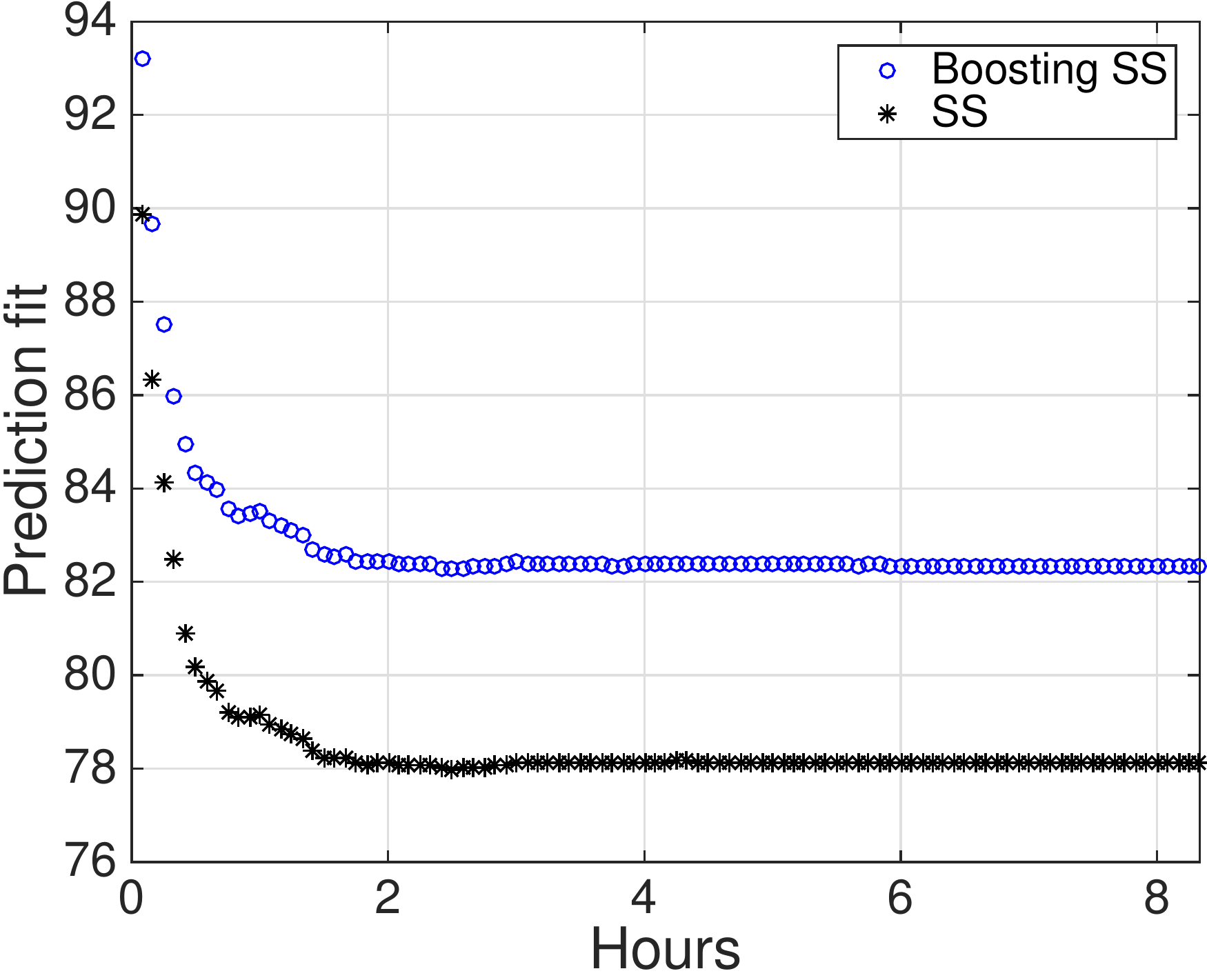}}
\hspace{.1in}
 { \includegraphics[scale=0.35]{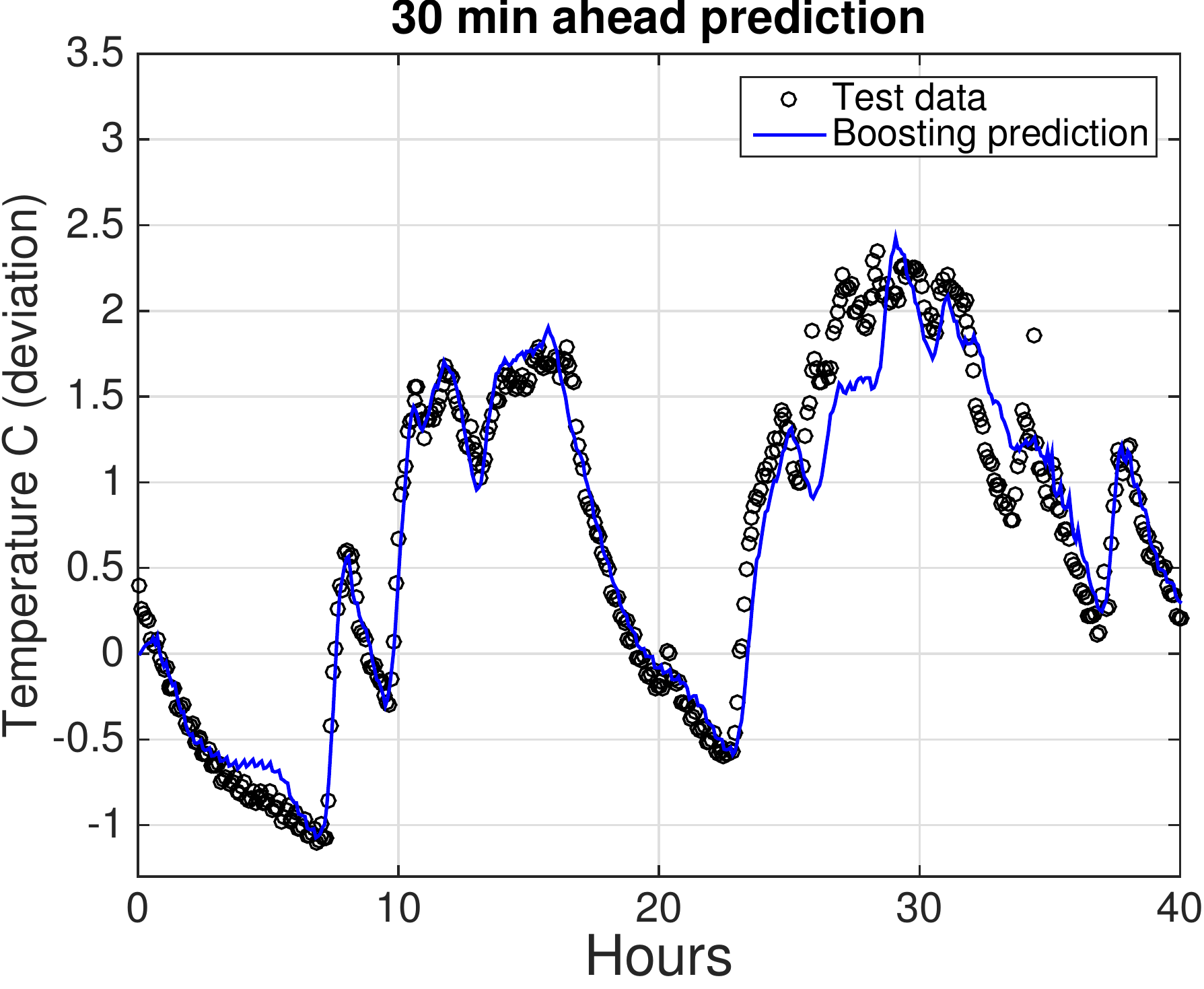}}
    \end{tabular}
    \caption{\emph{Left:} prediction fits obtained by the stable spine estimator (SS) and by Boosting 
    equipped with the stable spline kernel (Boosting SS). \emph{Right:} 30-min ahead temperature prediction from Boosting SS
    on a portion of the test set.}  \label{Fig3}
     \end{center}
\end{figure}

To test boosting on real data, we use a case study in thermodynamic modeling of 
buildings. Eight temperature sensors produced by Moteiv Inc were placed in two rooms of a small two-floor residential building of about $80$ $\textrm{m}^2$ and $200$ $\textrm{m}^3$. 
 The experiment  lasted for 8 days starting from February 24th, 2011; samples
were taken every 5 minutes. A thermostat controlled the heating systems and the reference temperature
was manually set every day depending upon occupancy and other needs. The goal of the experiment is to assess the predictive capability of models built using kernel-based estimators.

We consider Multiple Input-Single Output (MISO) models.
The temperature from the first node is the output ($y_i$) and the other
7 represent the inputs ($u^j_i$, $j=1,..,7$).
The measurements are split into a training set of size $N_{id}=1000$ and a test set of size $N_{test}=1500$. 
The notation $y^{test}$ indicates the test data, which is used to test the ability of our estimator to predict future data.
Data are normalized so that they have zero mean and unit variance before identification is performed.

The model predictive power is measured in terms of $k$-step-ahead prediction fit on $y^{test}$, i.e.
$$
100 \times \left(1-{\sqrt{\sum_{i=k}^{N_{test}}(y^{test}_i-\hat y_{i|i-k})^2}/ \sqrt{\sum_{i=k}^{N_{test}} (y^{test}_i)^2}}\right).
$$

We consider ARX models of the form 
 $$
 y_i = (g^1 \otimes y)_i  + \sum_{j=1}^{7} (g^{j+1} \otimes u^j)_i + v_i,
$$
where $\otimes$ denotes discrete-time convolution and the $\{g^j\}$ are  
8 unknown one-step ahead predictor impulse responses, each of length 50. 
Note that when such impulse responses are known, one can use them in an iterative fashion
to obtain any $k$-step ahead prediction.
We can stack all the $\{g^j\}$ in the vector $\theta$ and form the regression matrix $U$ with the past outputs and the inputs
so that the model becomes $y=U\theta + v$.
Then, we consider the following two estimators:
\begin{itemize}
\item {\bf Boosting SS}: this estimator regularizes
each $g^j$ introducing information on its smoothness 
and exponential decay by the stable spline kernel \citep{SS2010}. 
In particular, let $P \in \mathbb{R}^{50 \times 50}$ with $(i,j)$ entry $\alpha^{\max(i,j)}, \ 0 \leq \alpha <1$.  
Then, we recover $\theta$ by the boosting scheme (\ref{Eq7}) with $K=\mbox{blkdiag}(P,\ldots,P)$,
and $\mathcal{V}$ set to the quadratic loss. 
Note that the estimator contains the three unknown hyperparameters
$\nu,\alpha$ and $\gamma=\sigma^2/\lambda$. 
To estimate them, the training set is divided in half and hold-out cross validation is used. 
\item {\bf Classical Boosting SS}: the same as above except that $\nu$ can assume only integer values.
 \item {\bf SS}: this is the stable spline estimator described in \citep{SS2010}
(and corresponds to {\bf Boosting SS} with $\nu=1$) with hyperparameters obtained via marginal
likelihood optimization.
\end{itemize}

For {\bf Boosting SS}, we obtained $\gamma=0.02, \alpha=0.82$ and $\nu = 1.42$; 
note that it is not an integer. For {\bf Classical Boosting SS}, we obtained 
$\gamma=0.03, \alpha=0.79$ and $\nu = 1$. In practice, this
estimator gives the same results achieved by {\bf SS}
so that our discussion below just compares the performance of
{\bf Boosting SS} and {\bf SS}.

The left panel of Fig. \ref{Fig3} shows the prediction fits,
as a function of the prediction horizon $k$, obtained by {\bf Boosting SS} and {\bf SS}. 
Note that the non-integer $\nu$ gives an improvement 
in performance. 
This means that in this experiment 
using a continuous $\nu$  improves also over the classical boosting. 
The right panel of Fig. \ref{Fig3}  shows sample trajectories of half-hour-ahead boosting prediction on a part of the test set.

\subsection{Boosting kernel regression using the $\ell_1$ loss: Real data water tank system identification} \label{sec:real_experiment}

\begin{figure}[h!]
  \begin{center}
   \begin{tabular}{cc}
\hspace{-.2in}
 { \includegraphics[scale=0.35]{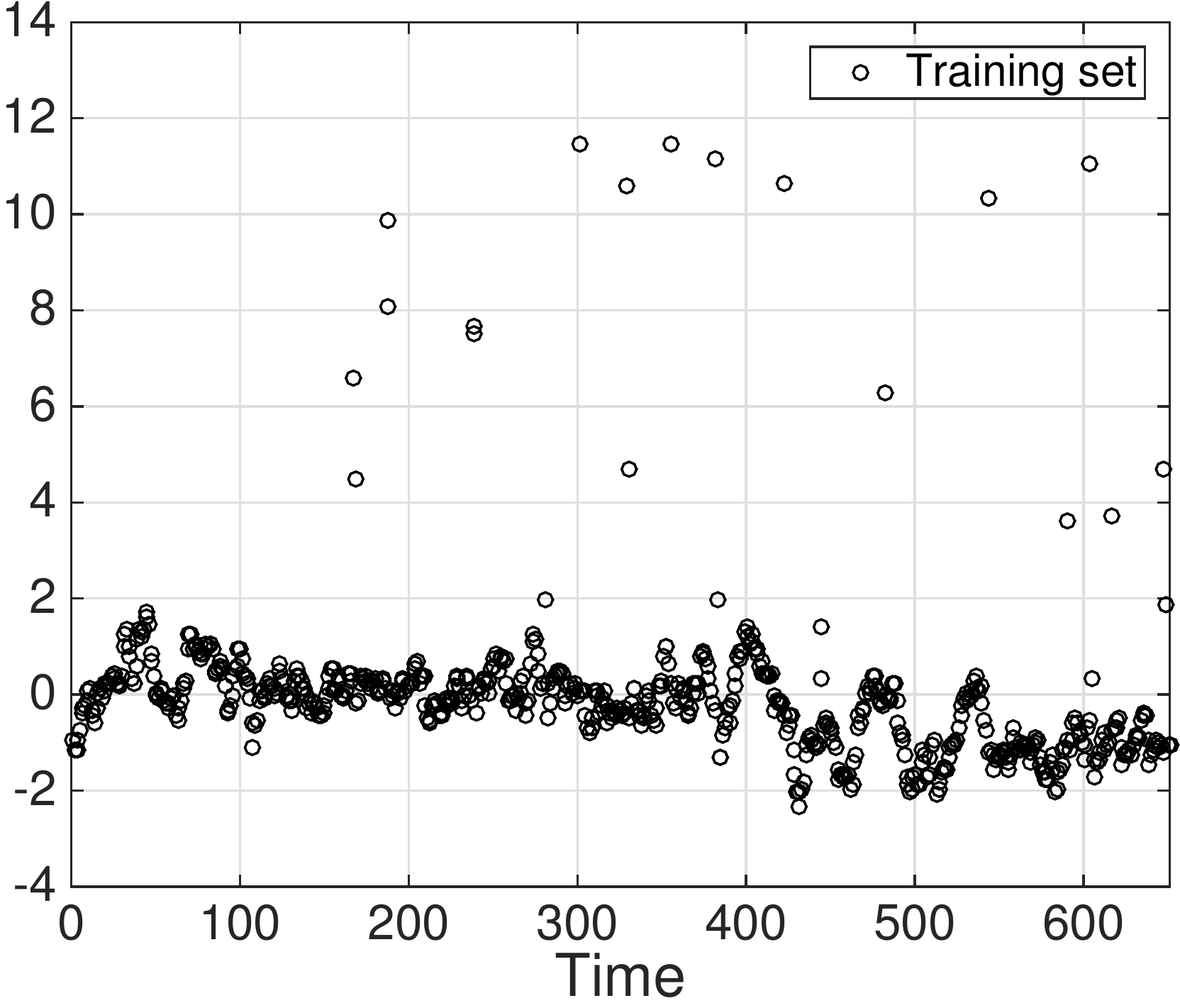}}
\hspace{.1in}
 { \includegraphics[scale=0.35]{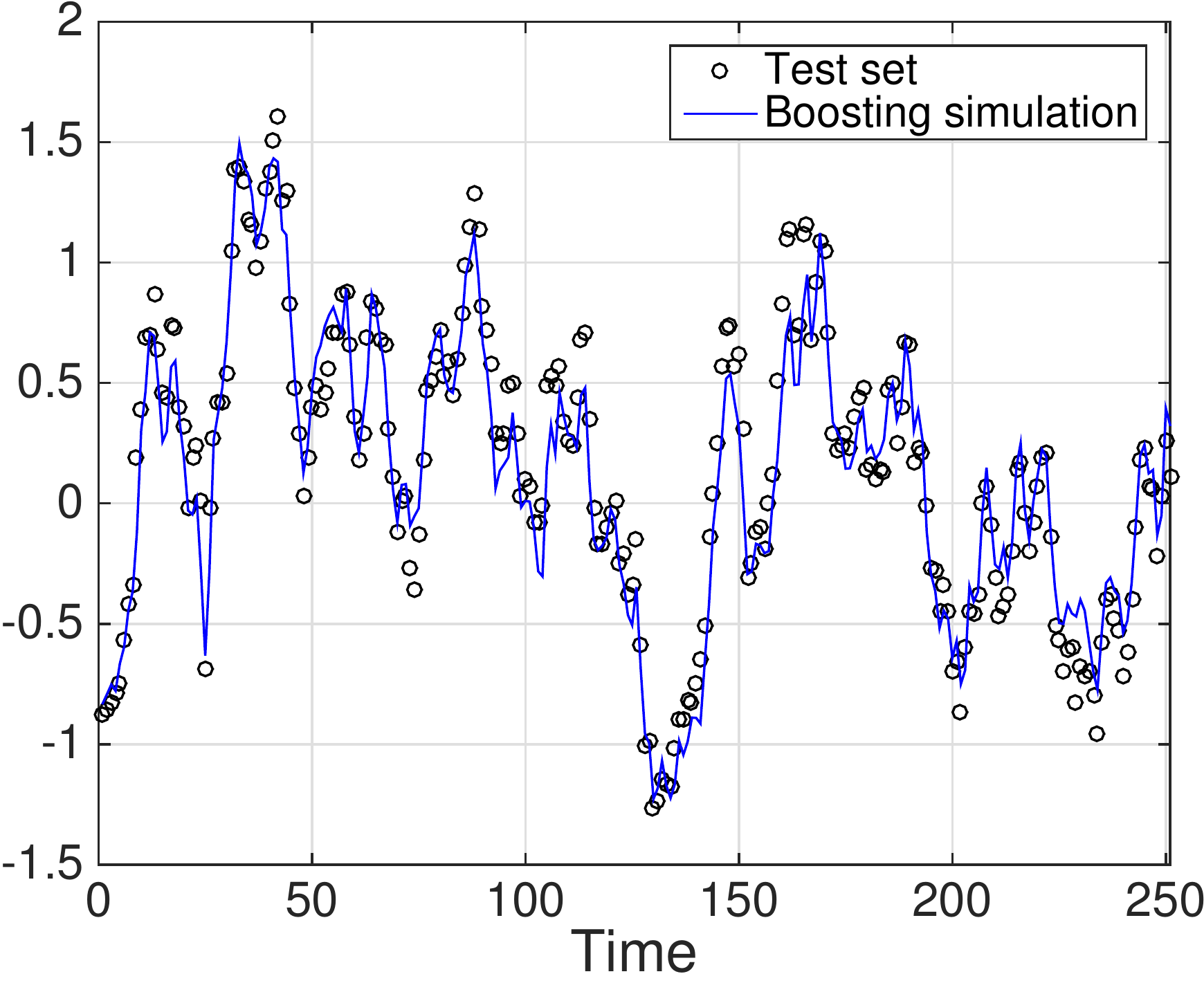}}
    \end{tabular}
   \caption{\emph{Left:} training set. \emph{Right:} test set simulation from Boosting SS  with $\ell_1$ loss.}  \label{Fig4}
     \end{center}
\end{figure}
We test our new class of boosting algorithms on another real data set
obtained from a water tank system (see also \cite{bottegal2016robust}). 
In this example, a tank is fed with water by an electric pump. 
The water is drawn from a lower basin, and then flows back through a hole in the bottom of the tank. 
The system input is the voltage applied, while
the output is the water level in the tank, 
measured by a pressure sensor at the bottom of the tank. 
{The setup represents a typical control engineering scenario, where the experimenter is interested in building a mathematical model of the system in order to predict its behavior and design a control algorithm \citep{Ljung}. To this end,} 
input/output samples are collected every second, comprising almost 1000 pairs
that are divided into a training and test set. The signals are de-trended, removing their means.
The training and test outputs are shown in the left and right panel
of Fig. \ref{Fig4}. One can see that the
second part of the training data are corrupted by outliers caused by pressure perturbations in the tank;
these are due to air occasionally being blown into the tank. {Our aim is to understand the predictive capability of the boosting kernel even in presence of outliers.}

We consider a FIR model of the form 
 $$
 y_i = (g \otimes u)_i  + v_i,
$$
where the unknown vector  $g \in \mathbb{R}^{50}$ contains the impulse response
coefficients. It is estimated using a variation of the estimator {\bf Boosting SS} described in the previous section:
while the stable spline kernel is still employed  to define the regularizer, the key difference is that  $\mathcal{V}$
in (\ref{Eq7}) is now set to the robust $\ell_1$ loss.
The hyperparameter estimates obtained using hold-out cross validation are $\gamma=17.18,\alpha=0.92$
and  $\nu=1.7$. 
The right panel of Fig. \ref{Fig4}  shows the boosting simulation of the test set.
The estimate from {\bf Boosting SS} predicts the test set with $76.2 \%$ fit.
Using the approach $\mathcal{V}$ equal to the quadratic loss,
the test set fit decreases to $57.8 \%$. 

\subsection{Boosting in RKHSs: Classification problem} \label{sec:class_rkhs}

Consider the problem described in Section 2 of \citep{hastie2001elements}.
Two classes are introduced, each defined by a mixture of Gaussian clusters;
the first 10 means are generated from a Gaussian $\mathcal{N}([1 \ 0]^T, I)$
and remaining ten means from $\mathcal{N}([0 \ 1]^T, I)$ with $I$ the identity matrix.
Class labels $1$ and $-1$ corresponding to the clusters are generated randomly with probability 1/2. 
Observations for a given label are generated by picking one of the ten means $m_k$ from the correct cluster with uniform probability 1/10,
and drawing an input location from $\mathcal{N}(m_k, I/5)$.
A Monte Carlo study of 100 runs is designed.
At any run, a new data set of size 500 is generated, 
with the split given by $50\%$ for training   
and $25\%$ each for validation and testing.
The validation set is used to estimate through
hold-out cross-validation the unknown hyperparameters,
in particular the boosting parameter $\nu$. 
Performance for a given run is quantified by 
computing percentage of data correctly classified.

We compare the performance of the following two estimators:
\begin{itemize}
\item {\bf Boosting+$\ell_1$ loss}: this is the boosting scheme in RKHS illustrated in
the previous section ($\nu$ may assume only integer values) with the weak learner (\ref{WeakRKHS}) 
defined by the Gaussian kernel
$$
K(x,a) = \exp(-10 |x-a|^2), \quad | \cdot |=\mbox{Euclidean norm}
$$
setting each $\mathcal{V}_i$ to the $\ell_1$ loss and using $\gamma=1000$.
\item {\bf Boosting kernel+$\ell_1$ loss}: this is the estimator
using the new boosting kernel. The latter 
 is defined by the kernel matrix built using the same Gaussian kernel
 reported above, with $\sigma^2=1,\lambda=0.001$ so that one still has $\gamma=1000$.
 The function estimate is achieved solving (\ref{Eq9}) using the $\ell_1$ loss.
\end{itemize}

Note that the two estimators contain only one unknown parameter, i.e. $\nu$ which is
estimated by the cross validation strategy described above.
The top left panel of Fig. \ref{FigTib1} compares their performance.
Interestingly, results are very similar, see also Table \ref{Table1}.
This supports the fact that the boosting kernel
can include classical boosting features in the estimation process. 
In this example, the difference between the two methods is mainly in their computational complexity.
In particular, the top right panel of Fig. \ref{FigTib1} reports  
some cross validation scores as a function of the boosting iterations counter $\nu$
for the classical boosting scheme. The score is linearly interpolated, since $\nu$ can assume only integer values.
On average, during the 100 Monte Carlo runs the optimal value 
corresponds to $\nu=340$, so on average, problems~(\ref{WeakRKHS}) 
must be solved 340 times. After obtaining the estimate of $\nu$, to obtain the function estimate
using the union of the training and validation data, another 340 problems must be solved.

In contrast, the boosting kernel used in (\ref{Eq9}) does not require repeated optimization of the weak learner. 
Using a golden section search,estimating  $\nu$  
by cross validation  on average requires solving 20 problems of the form (\ref{Eq9}).
Once $\nu$ is found, only one additional optimization problem must be solved
to obtain the function estimate. Summarizing, in this example the boosting kernel
obtains results similar to those achieved by classical boosting,
but requires solving only 20 optimization problems rather than nearly 700.
The computational times of the two approaches are reported in the bottom panel of Fig. \ref{FigTib1}.

Table \ref{Table1} also shows the average fit obtained by other two estimators. 
The first estimator is denoted by {\bf Boosting SVC}: it
coincides with {\bf Boosting kernel+$\ell_1$ loss},
except that the hinge loss replaces the $\ell_1$ loss in (\ref{Eq9}). 
The other one is {\bf SVC} and corresponds to the classical support
vector classifier. It uses the same Gaussian kernel defined above with the regularization parameter
$\gamma$ determined via cross validation 
on a grid containing 20 logarithmically spaced values on the interval $[0.01,100]$.  
One can see that the best results are obtained by
boosting support vector classification. Recall also that the hinge loss cannot be adopted
using the classical boosting scheme as discussed at the end of the previous section.

\begin{table*}
\begin{center}
\begin{tabular}{cccc}\hline
{\bf Boosting+$\ell_1$} & {\bf Boosting kernel+$\ell_1$}  & {\bf Boosting SVC} & {\bf SVC} \\
78.91 \%          & 79.15 \%  & 79.73 \%  & 78.12 \%\\
 \hline \phantom{|}
\end{tabular} 
\end{center}
\caption{Average percentage classification fit} \label{Table1}
\end{table*}

\begin{figure}[h!]
  \begin{center}
   \begin{tabular}{cc}
\hspace{-.2in}
 { \includegraphics[scale=0.35]{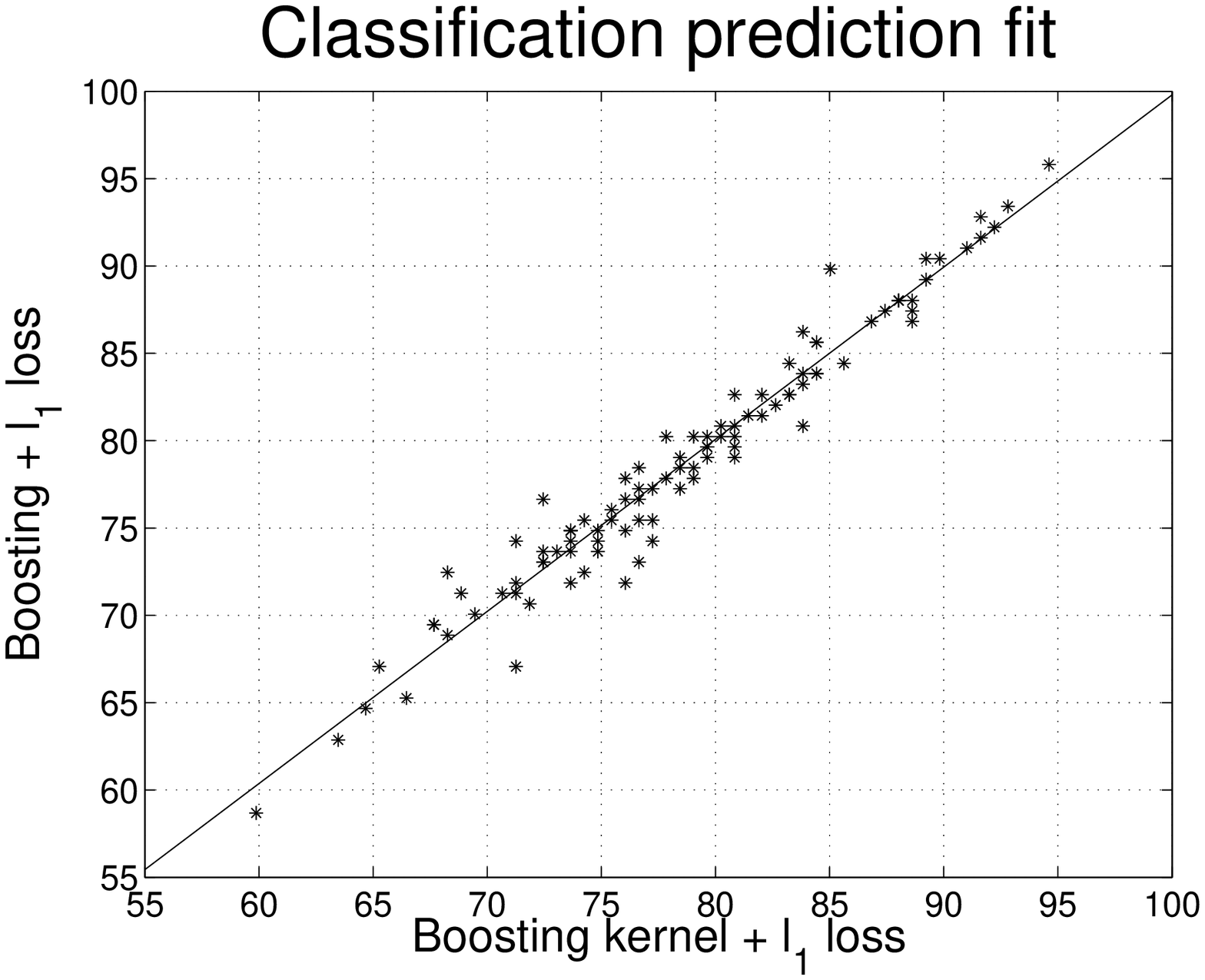}}
 \hspace{.1in}
 { \includegraphics[scale=0.35]{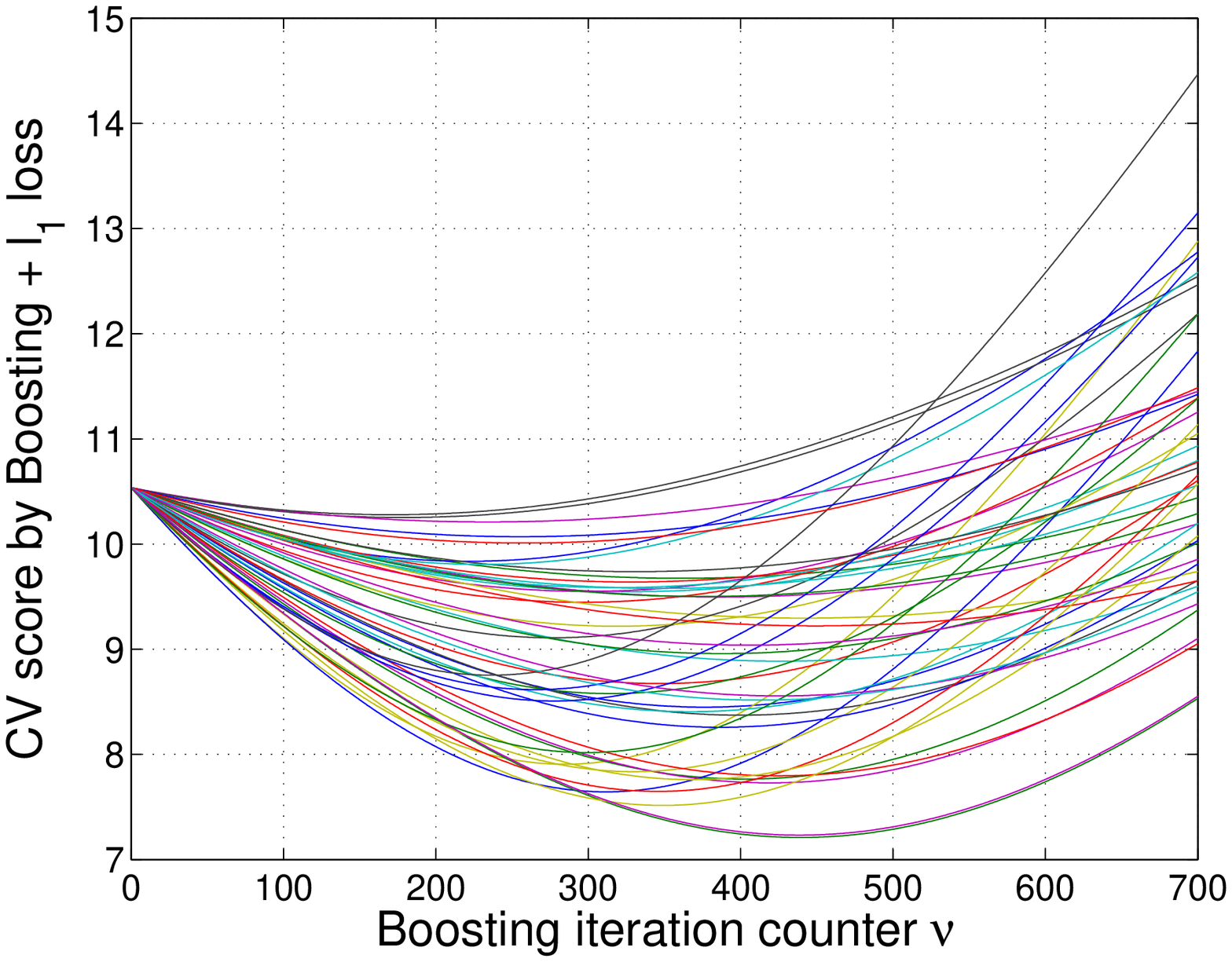}}\\
  \hspace{.1in}
 { \includegraphics[scale=0.35]{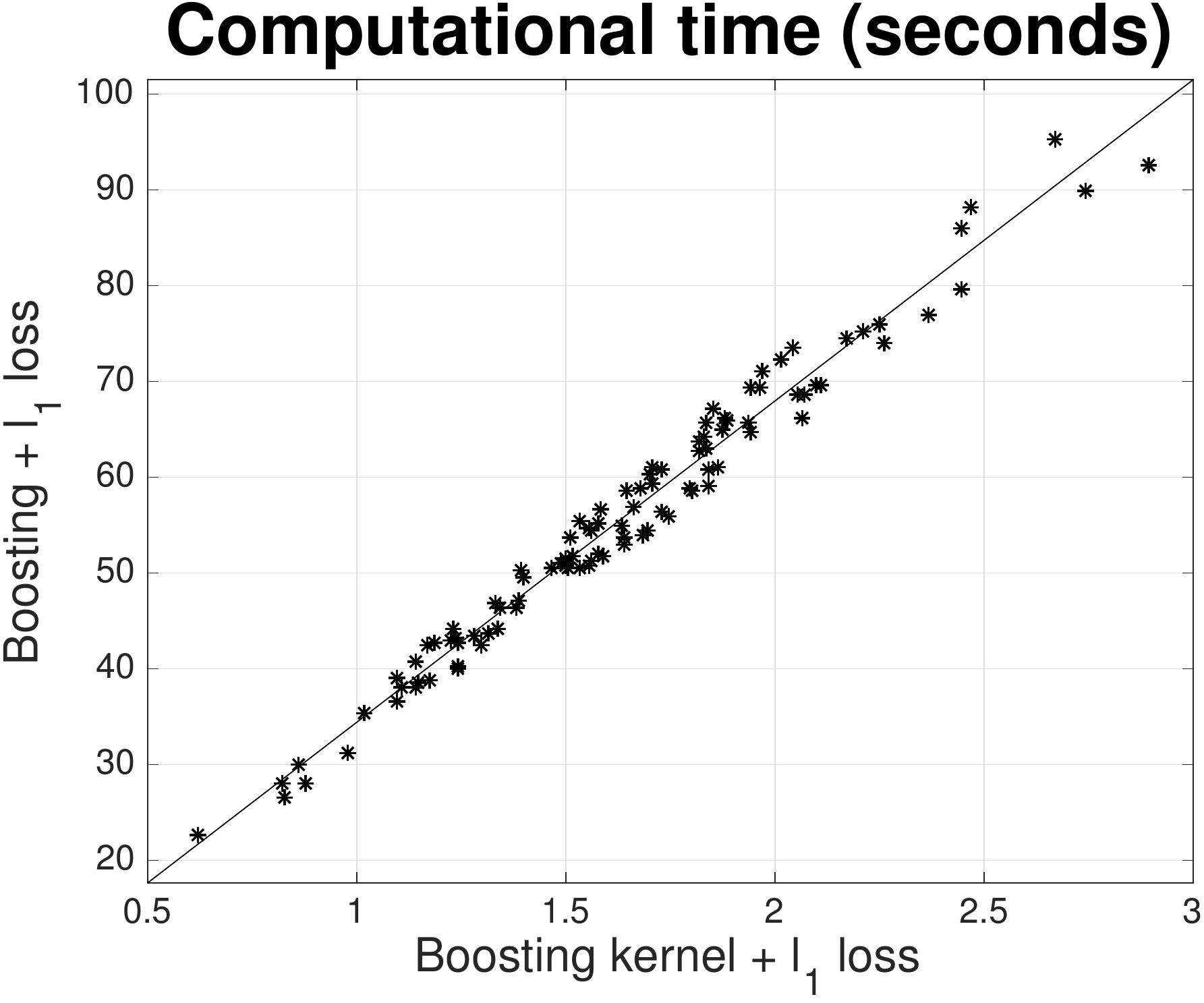}}
    \end{tabular}
   \caption{{\bf Classification problem} \emph{Top Left} Fits obtained by the new boosting kernel  (x-axis) vs 
   fits obtained by the classical boosting scheme (y-axis). Both the estimators use the $\ell_1$ loss.
   \emph{Top Right} Some cross validation scores computed using 
   the classical boosting scheme equipped with the $\ell_1$ loss as a function of the boosting iteration counter $\nu$. 
   Each curve corresponds to a different run. \emph{Bottom} Computational times 
   to solve a classification problem needed by the new boosting kernel  (x-axis) and
    by the classical boosting scheme (y-axis). }  \label{FigTib1}
     \end{center}
\end{figure}

\subsection{Boosting in RKHSs: Regression problem} \label{sec:regression_rkhs}

Consider now a regression problem where only smoothness information
is available to reconstruct the unknown function from sparse and noisy data. 
As in the previous example, our aim is to illustrate
how the new class of proposed boosting algorithms
can solve this problem using a RKHS with a great computational
advantage w.r.t. the traditional scheme.
For this purpose, we just consider a classical benchmark problem where 
the unknown map is the Franke's bivariate test function $f$ given by the
weighted sum of four exponentials \citep{Wahba1990}. 
Data set size is 1000 and is generated as follows.
First, 1000 input locations $x_i$ are drawn from a
uniform distribution on $[0,1] \times [0,1]$. 
The data are divided in the same way described in the classification problem.
The outputs in the training and validation data are 
$$
y_i = f(x_i) + v_i
$$
where the errors $v_i$ are independent, with distribution given by the mixture of Gaussians
$$
0.9 \mathcal{N}(0,0.1^2) + 0.1 \mathcal{N}(0,1). 
$$
The test outputs $y^{test}_i$ are instead given by noiseless outputs $f(x_i^{test})$.
A Monte Carlo study of 100 runs is considered, where a new data set is generated at any run.
The test fit is computed as
$$
100 \left( 1-\frac{| y^{test} -\hat{y}^{test}|}{|y^{test}- \mbox{mean}(y^{test}) |}\right),
$$
where $\hat{y}^{test}$ is the test set prediction.

Note that the mixture noise can model the effect of outliers which
affect, on average, 1 out of 10 outputs. This motivates the use of the robust $\ell_1$ loss. 
Hence, the function is still reconstructed by {\bf Boosting+$\ell_1$ loss}
and {\bf Boosting kernel+$\ell_1$ loss} which are implemented exactly in the same way as previously described.
Fig.  \ref{FigWahba1} displays the results with the same rationale adopted in
Fig.  \ref{FigTib1}. 
The fits are close each other but, at any run, the classical boosting scheme requires 
solving hundreds of optimization problems, while the boosting kernel-based approach needs to solve
around 15 problems on average. The computational times of the two approaches are reported in the bottom panel of Fig. \ref{FigWahba1}.

 Finally, Table \ref{Table2} reports the average fits including those achieved 
 by {\bf Gaussian kernel+$\ell_1$ loss}, which is implemented 
 as the estimator {\bf SVC} described in the previous section except that the
 hinge loss is replaced by the $\ell_1$ loss. The best results are achieved by boosting kernel with $\ell_1$.

\begin{table*}
\begin{center}
\begin{tabular}{cccc}\hline
{\bf Boosting+$\ell_1$} & {\bf Boosting kernel+$\ell_1$}  & {\bf Gaussian kernel+$\ell_1$}  \\
76.62 \%          & 76.75 \%  & 75.19 \%\\
 \hline \phantom{|}
\end{tabular} 
\end{center}
\caption{Average percentage regression fit} \label{Table2}
\end{table*}

\begin{figure}[h!]
  \begin{center}
   \begin{tabular}{cc}
\hspace{-.2in}
 { \includegraphics[scale=0.35]{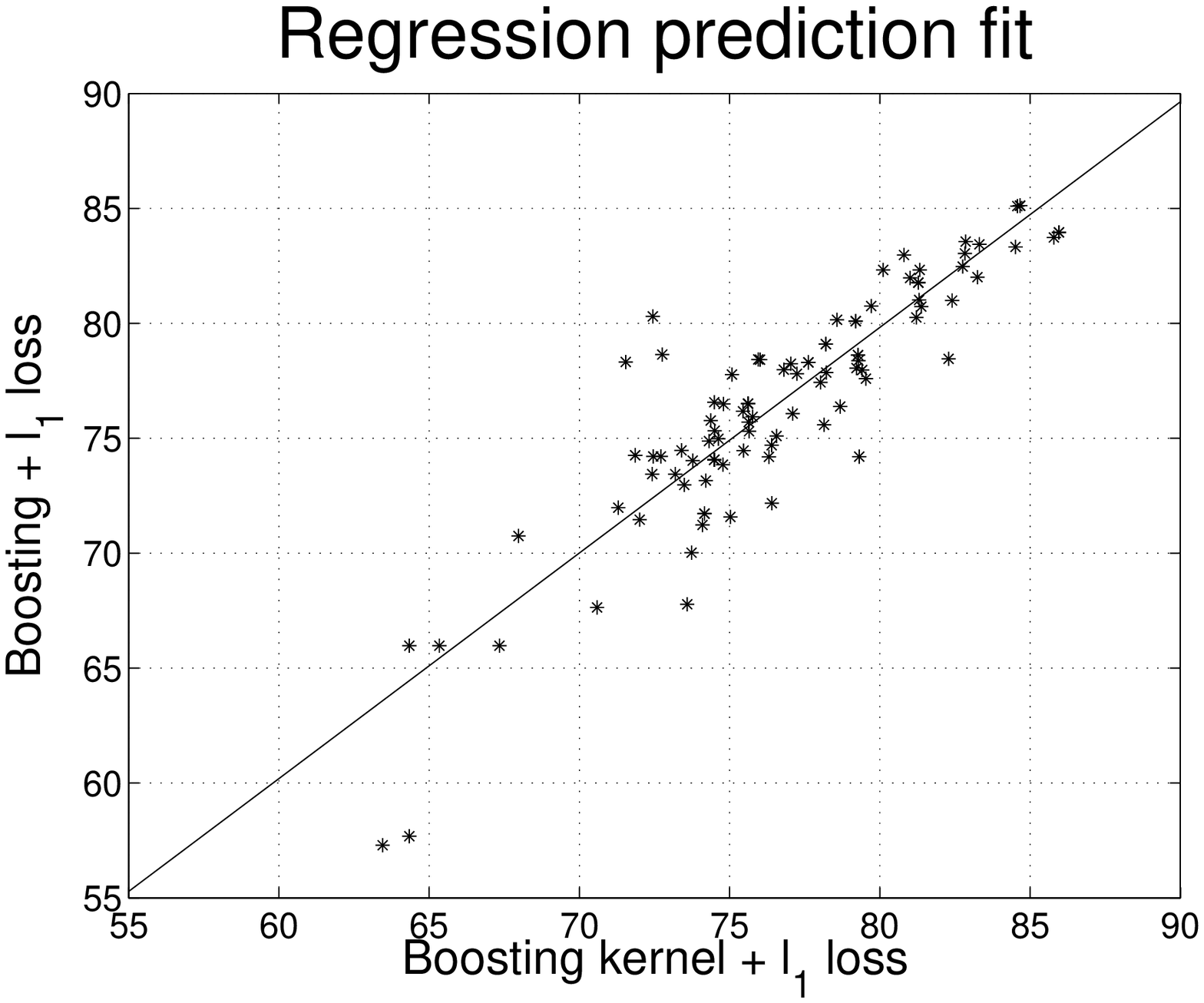}}
\hspace{.1in}
 { \includegraphics[scale=0.35]{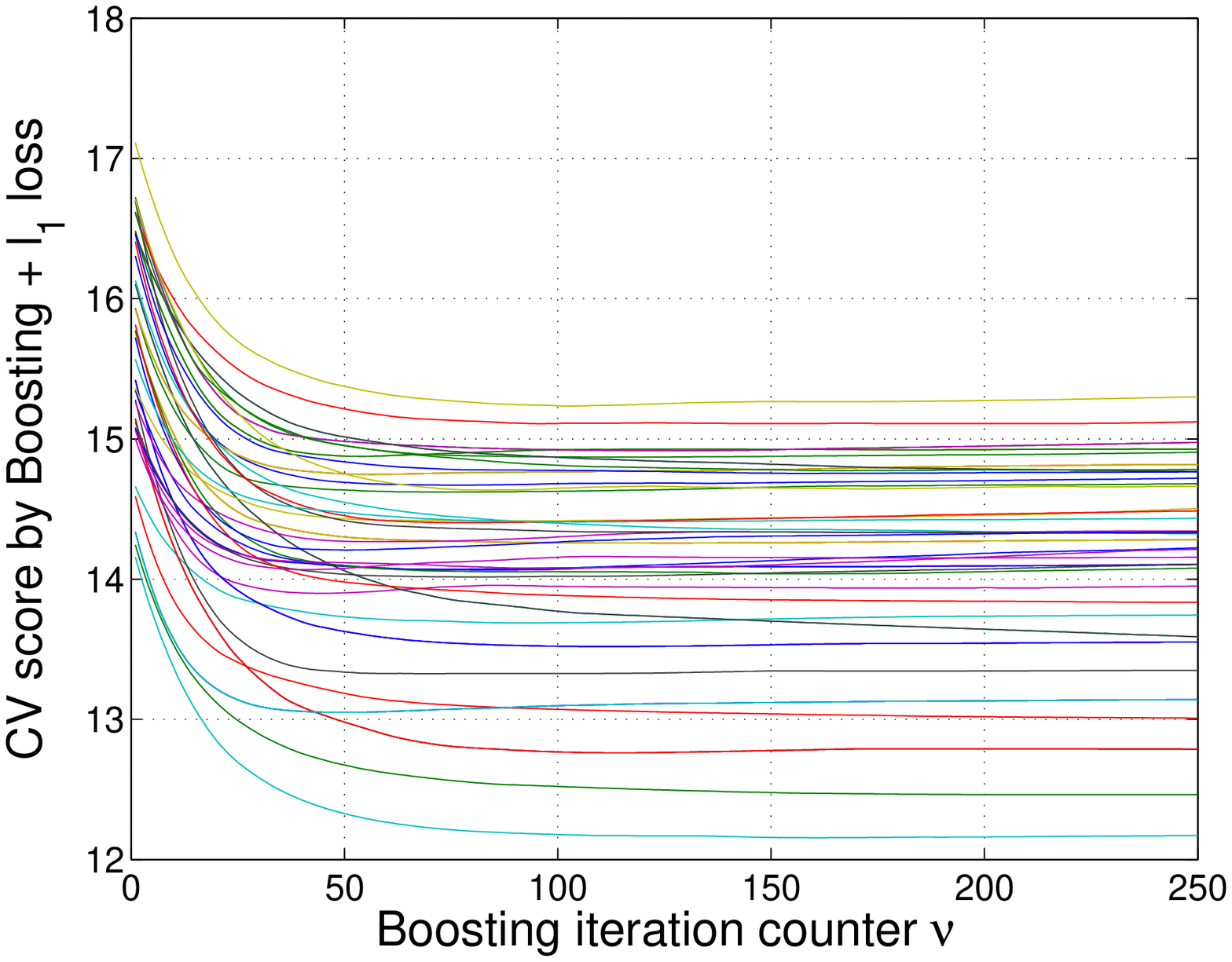}}\\
  \hspace{.1in}
 { \includegraphics[scale=0.35]{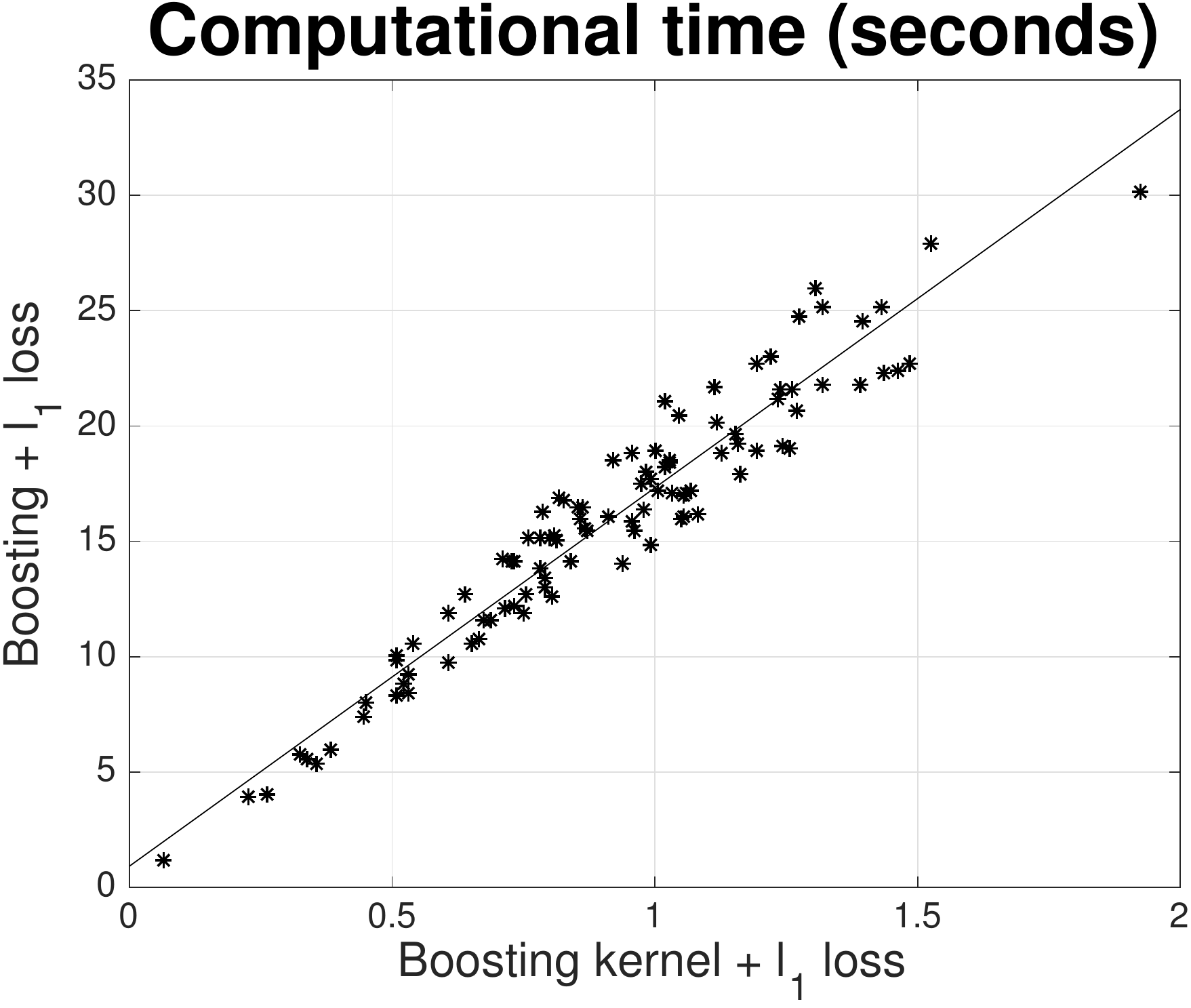}}
    \end{tabular}
    \caption{{\bf Regression problem} \emph{Top Left} Fits obtained by the new boosting kernel  (x-axis) vs 
   fits obtained by the classical boosting scheme (y-axis). Both the estimators use the $\ell_1$ loss.
   \emph{Top Right} Some cross validation scores computed using 
   the classical boosting scheme equipped with the $\ell_1$ loss as a function of the boosting iteration counter $\nu$.
 \emph{Bottom} Computational times 
   to solve a regression problem needed by the new boosting kernel  (x-axis) and
    by the classical boosting scheme (y-axis). }  
   \label{FigWahba1}
     \end{center}
\end{figure}

\section{Conclusion}
\label{sec:conclusions}

In this paper, we presented a connection between boosting and kernel-based methods.
We showed that in the context of regularized least-squares, boosting with a weak learner can be
interpreted using a boosting kernel. This connection was used for three main applications: 
(1) providing insight into boosting estimators and when they can be effective;
(2) determining schemes for hyperparameter estimation using the kernel connection and 
(3) proposing a more general class of boosting schemes for general misfit 
measures, including $\ell_1$, Huber and Vapnik, which can use also RKHSs
as hypothesis spaces. 

The proposed approach combines generality with 
computational efficiency. In contract to the classic boosting scheme, treating 
boosting iterations $\nu$ as a continuous hyperparameter 
may improve prediction capability. 
Real data support the use of these generalized schemes in practice.
Indeed, in some real experiments we obtained $\nu = 1.42$ as estimate improving  
on the classic scheme. 
In addition, this new viewpoint avoids sequential solutions.
This turns out a particularly strong advantage for boosting
using general losses $\mathcal{V}$, as each boosting run would itself require an iterative algorithm. 
This has been outlined also in the RKHS setting: the boosting kernel allows to obtain
results similar (or also better) than the classical boosting scheme dramatically reducing the computational cost.

\vskip 0.2in
\bibliography{boosting_biblio,references_sasha}

\end{document}